  \providecommand\BibTeX{{%
    \normalfont B\kern-0.5em{\scshape i\kern-0.25em b}\kern-0.8em\TeX}}}
\theoremstyle{plain}
\newtheorem{theorem}{Theorem}[section]
\theoremstyle{definition}
\newtheorem{definition}[theorem]{Definition}
\theoremstyle{remark}
\newcommand{\pertx}{B(x, \epsilon)}
\DeclareMathOperator*{\argmax}{argmax}
\newcommand{\nback}[1][-.95pt]{
  \mathrel{\raisebox{#1}{$\rotatebox[origin=c]{-315}{\scaleobj{0.55}{-}}$}}
}
\newcommand{\undernegpreccurlyeq}{%
\mathrel{\ooalign{$\preccurlyeq$\cr\kern1.2pt$\nback$}}}
\begin{document}

\title[Learning Minimal Neural Specifications]{Learning Minimal Neural Specifications}

\author{Chuqin Geng}
\email{chuqin.geng@mail.mcgill.ca}
\affiliation{%
  \institution{McGill University}
  \country{Canada}
}

\author{Zhaoyue Wang}
\email{zhaoyue.wang@mail.mcgill.ca}
\affiliation{%
  \institution{McGill University}
  \country{Canada}
}

\author{Haolin Ye}
\email{haolin.ye@mail.mcgill.ca}
\affiliation{%
  \institution{McGill University}
  \country{Canada}
}


\author{Xujie Si}
\email{six@cs.toronto.edu}
\affiliation{%
  \institution{University of Toronto}
  \country{Canada}
}
\renewcommand{\shortauthors}{Geng et al.}


\begin{abstract}

Formal verification is only as good as the specification of a system, which is also true for neural network verification. Existing specifications follow the paradigm of \textit{data as specification}, where the local neighborhood around a reference data point is considered correct or robust. While these specifications provide a fair testbed for assessing model robustness, they are too \textit{restrictive} for verifying any unseen test data points -- a challenging task with significant real-world implications. Recent work shows great promise through a new paradigm, \textit{neural representation as specification}, which uses neural activation patterns (NAPs) for this purpose. However, it computes the most refined NAPs, which include many redundant neurons. In this paper, we study the following problem: Given a neural network, find a minimal (general) NAP specification that is sufficient for formal verification of the network’s global robustness. Finding the minimal NAP specification not only expands verifiable bounds but also provides insights into which set of neurons contributes more to the model’s robustness. To address this problem, we propose three approaches—conservative, statistical, and optimistic—each offering distinct trade-offs between efficiency and performance. The first two rely on the verification tool to find minimal NAP specifications. The optimistic method efficiently estimates minimal NAPs using adversarial examples, without making calls to the verification tool until the very end. Each of these methods offers distinct strengths and trade-offs in terms of minimality and computational speed, making each approach suitable for scenarios with different priorities. The learnt minimal NAP specification allows us to inspect potential causal links between neurons and the robustness of state-of-the-art neural networks, a task for which existing work fails to scale. Our experimental results suggest that minimal NAP specifications require much smaller fractions of neurons compared to the NAP specifications computed by previous work, yet they can significantly expand the verifiable boundaries to several orders of magnitude larger.

\end{abstract}

\received{20 February 2007}
\received[revised]{12 March 2009}
\received[accepted]{5 June 2009}

\maketitle

\section{Introduction}

The growing prevalence of deep learning systems in decision-critical applications has elevated safety concerns regarding AI systems, such as their vulnerability to adversarial attacks ~\cite{GoodfellowSS14, DBLP:journals/cacm/DietterichH15}. 
Therefore, the verification of AI systems has become increasingly important and attracted much attention from the research community. The field of neural network verification largely follows the paradigm of software verification -- using formal methods to verify desirable properties of systems through rigorous mathematical specifications and proofs~\cite{DBLP:journals/computer/Wing90}. A notable trend in existing works~\cite{reluplex, Marabou, huang2017cav, huang2020csr, abc} focuses on scaling verification to larger and more complex neural networks. While scalability is undeniably important and requires the collective effort of the research community, this paper explores an orthogonal angle that has been largely overlooked: \textit{defining meaningful specifications}.

\begin{figure}[ht]
    \centering
    \begin{subfigure}[t]{0.48\textwidth}
    {\includegraphics[width=\linewidth]{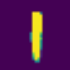}}
        \caption{Local neighborhood as specifications of the 3 reference points compared to using NAP as specification, NAP learnt from the 3 reference points and their local neighborhoods. Covers 4.23\% and 7.64\% of the WL region respectively.}
        \label{fig:subfig1}
    \end{subfigure}
    \hfill
    \begin{subfigure}[t]{0.48\textwidth}
        {\includegraphics[width=\linewidth]{figures/3_points_nap_combined.pdf}}
        \caption{NAP as specification, NAP learnt from the 3 reference points. Covers 10.61\% of the WL region.}
        \label{fig:subfig2}
    \end{subfigure}
    
    \vspace{1em} 

    \begin{subfigure}[t]{0.48\textwidth}
        {\includegraphics[width=\linewidth]{figures/dataset_nap.pdf}}
        \caption{NAP as specification, NAP learnt from data points in the entire WL region. Covers 26.36\% of the WL region.}
        \label{fig:subfig3}
    \end{subfigure}
    \hfill
    \begin{subfigure}[t]{0.48\textwidth}
        {\includegraphics[width=\linewidth]{figures/coarsened.pdf}}
        \caption{NAP as specification, with NAP learnt from the entire WL region and coarsened. Covers 32.43\% of the WL region.}
        \label{fig:subfig4}
    \end{subfigure}

    \caption{Illustration of verifiable regions of the Weak Left (WL) region of the ACAS\_Xu \cite{acas_xu} advisories for a head-on encounter with \( a_{\text{prev}} = \text{Clear of Conflict (COC)}, \, \tau = 0\text{s}\). The red bounding boxes represent the verifiable regions when using data as specifications, constrained by the reference points and their adversarial examples. The green hatched areas denote verifiable regions when using Neural Activation Patterns (NAP) as specifications. }
    \label{fig:ACAS_XU}
\end{figure}

To illustrate, nearly all existing works follow a \textit{"data as specification"} paradigm, where the specification is defined by the consistency of local neighborhoods—often \(L_\infty\) balls—around reference data points. While local neighborhood specification provides a fair and effective testbed for evaluating neural network robustness, it primarily covers a tiny convex region of input data that can mathematically be described by adding noise to the reference point, as illustrated by the red bounding box in Figure \ref{fig:subfig1}. This small region is constrained by it's adversarial examples. It is too restrictive to adequately address a broader, unseen test set, which are real data sampled from the underlying distribution. 



Ideally, the specification should produce a verifiable region that includes all data points from the same class, where the data points may be distributed in a non-linear and non-convex manner within the input space.
For instance, consider the safety-critical Airborne Collision Avoidance System for Unmanned Aircraft (ACAS Xu), the neural network processes an input representing the state of the aircraft and outputs one of five advisories: Clear of Conflict (COC), Weak Left (WL), Strong Left (SL), Weak Right (WR), or Strong Right (SR). Observe in \ref{fig:ACAS_XU} the the inputs belonging to Weak Left (WL) depicted in grey forms a irregular shape. This complexity underscores the need for flexible and robust specifications capable of accurately capturing such intricate distributions. Unlike local neighborhood specifications, manually defining such a specification is impractical. This poses a tricky chicken-egg problem: machine learning (ML) is necessary because it's challenging to formally write down a precise definition (aka specification); but to be able to verify machine learning models, a formal specification would be needed. We argue that, in order to tackle this challenge, a separate learning algorithm for specifications is necessary. In this view, the \textit{ “data as specification”} paradigm is a simple but extremely overfitted algorithm for specification learning, which simply picks a small neighborhood of a reference data point in the input space. To this end, \citet{geng23} proposes using a new and more promising paradigm - \textit{"neural representation as specification"}, which learns a specification in the representation space of the trained machine learning model in the form of neural activation patterns (NAPs). NAPs are value abstractions of hidden neurons which have been shown useful for understanding the inference of a model~\cite{nap_exe}. Most importantly, a well-trained neural network would exhibit similar activation patterns for input data from the same class, regardless of their actual distance in the input space~\cite{rep_learning,tishby2015deep,geng23}. This key observation suggests that if we learn a NAP—a common activation pattern shared by a certain class of data—it can serve as a candidate specification for verifying data from that class. Once such a NAP specification is successfully verified, we say any data covered by this NAP (exhibiting this pattern) provably belongs to the corresponding class. Ideally, if that NAP covers all data from a class, it can be considered a machine-checkable definition of that class. Geometrically speaking, compared to the \(L_\infty\) ball specifications, NAP specifications provide greater flexibility, enabling coverage of larger and more irregular regions, This advantage is particularly notable when reference points are near the boundaries of the WL region as illustrated in Figure \ref{fig:subfig1}. In \ref{fig:subfig2} we observe that the common NAP learned from the three reference points covers a significantly larger region compared to the combined coverage of the individual NAPs learned from each reference point independently. \ref{fig:subfig3} demonstrates even greater coverage when the NAP specification is learned from all inputs belonging to the WL class.

However, it is noteworthy that the current approach to computing NAPs relies on a straightforward statistical method that assumes each neuron contributes to certifying the robustness of neural networks. Consequently, the computed NAPs are often overly refined. This is a restrictive assumption, as many studies \cite{lottery,DyingReLU,Pruning,RedundancyReduction} have shown that a significant portion of neurons in neural networks may not play a substantial role. In the spirit of Occam's Razor, we aim to systematically remove these redundant neurons that do not impact robustness. This motivates us to address the following challenge: given a neural network, how can we identify a minimal NAP (i.e., the coarsest abstraction) that is sufficient for verifying the network's robustness? This problem is important for the following reasons: i) Minimal NAP specifications cover potentially much larger regions in the input space compared to the most refined ones, enhancing the ability to verify more unseen data; ii) Minimal NAP specifications provide insight into which neurons are responsible for the model’s robust predictions, helping to uncover the black-box nature of neural networks. For instance, if we aim to decode NAPs into human-understandable programs or rules \cite{rule_li}, minimal NAPs will always be easier to interpret than larger NAPs. We leave the interpretation of individual neurons as future work.

To find the minimal NAP specifications, we first introduce a basic algorithm, \textsc{Coarsen}, which exhaustively checks all possible candidates using off-the-shelf verification tools, such as Marabou \cite{Marabou}. Specifically, \textsc{Coarsen} gradually coarsens each neuron in the most refined NAPs, retaining only the coarsened neurons when Marabou confirms verification success. While this approach provides correctness guarantees, it is not efficient for verifying large neural networks, as calls to verification tools are typically expensive. To improve efficiency, we further propose statistical variants of \textsc{Coarsen}—namely, \textsc{StochCoarsen}, which leverages sampling and statistical learning principles to find minimal NAP specifications. \ref{fig:subfig4} highlights that coarsened NAP specifications, which involve fewer neurons, can cover an even larger portion of the WL region, highlighting the effectiveness of coarsened specifications in achieving broader coverage of the complex input space.

However, verification-dependent approaches face challenges in scaling up to state-of-the-art neural network models due to the limitations of current verification tools. To address this, we develop an optimistic method, \textsc{OptAdvPrune}, which leverages adversarial examples to identify essential neurons—the fundamental building blocks of minimal NAPs. Our experimental results show it can efficiently estimate an initial starting point for minimal NAP specification and only making calls to the verification tool until the very end. We further apply these methods to state-of-the-art neural networks, such as VGG-19 \cite{vgg}. While formal verification of these estimated minimal NAP specifications remains impractical due to current scalability constraints in the underlying verification engine, we demonstrate that these NAPs capture significant hidden features and concepts learned by the model. As many studies suggest, visual interpretability and robustness are inherently related, emerging in learned features and representations \cite{alvarez2018towards,boopathy2020proper}. Consequently, we believe that the identified essential neurons contribute to the model's robustness, and their activation states can serve as empirical indicators of confident predictions. Our contributions can be summarized as follows:

\begin{itemize}
\item We introduce the problem of learning minimal NAP specifications, emphasizing the need for a new paradigm in neural network specification. We present three approaches, each offering distinct trade-offs between efficiency and performance.

\item We define key concepts such as the abstraction function, NAP specification, and essential neurons. We demonstrate that the problem reduces to identifying essential neurons and propose an optimistic approaches for estimating them.

\item We propose a simple yet effective method for estimating the volume of the region mapped by the NAP. Our experiments show that minimal NAP specifications extend the verifiable bound by several orders of magnitude.

\item We estimate essential neurons in the state-of-the-art VGG-19 network. Using a modified Grad-CAM map, we show that these essential neurons contribute to visual interpretability, providing strong evidence that they may also explain the model's robustness.
\end{itemize}

\section{Background}
\label{sec:background}

In this section, we introduce basic knowledge and notations of adversarial attacks and neural network verification, with an emphasis on verification using NAP specifications. This may help readers better understand the importance of learning minimal NAP specifications.

\subsection{Neural Networks for Classification Tasks}
\label{sec:NN_basic}


In this paper, we focus on feed-forward neural networks equipped with ReLU activation functions. A feed-forward network \( N \) consists of \( L \) layers, where each layer alternates between a linear transformation and a ReLU activation. For the \( l \)-th layer, we denote the pre-activation and post-activation values as \( z^{(l)}(x) \) and \( \hat{z}^{(l)}(x) \), respectively. These are computed as follows: $
z^{(l)}(x) = W^{(l)} \hat{z}^{(l-1)}(x) + b^{(l)}, \quad \hat{z}^{(l)}(x) = ReLU(z^{(l)}(x))$, where \( W^{(l)} \) and \( b^{(l)} \) represent the weight matrix and bias vector of the \( l \)-th layer, respectively.

Let \( d_l \) denote the number of neurons in the \( l \)-th layer, and let \( N_{i,l} \) refer to the \( i \)-th neuron in that layer. The pre-activation and post-activation values of \( N_{i,l} \) for a given input \( x \) are \( z_i^{(l)}(x) \) and \( \hat{z}_i^{(l)}(x) \), respectively. The network as a whole can be viewed as a function \( F^{<N>}: \mathbb{R}^{d_0} \to \mathbb{R}^{d_L} \), where \( F^{<N>}(x) = z^{(L)}(x) \) represents the output of the final layer. For the \( i \)-th neuron in the last layer, its output is \( F^{<N>}_i(x) = z_i^{(L)}(x) \). When the context is clear, we omit \( N \) for simplicity and refer to the network as \( F \). In multi-class classification tasks with \( C \) classes, the network predicts the class \( c \in C \) for a given input \( x \) if the output \( F_c(x) \) of the \( c \)-th neuron in the \( L \)-th layer is the largest among all outputs.

\subsection{Adversarial Attacks and Neural Networks Verification Problem}

Given a neural network $F$ and a reference point $x$, adversarial attacks aim to search for a point $x'$ that is geometrically close to the reference point $x$ such that $x'$ and $x$ belong to different classes. Here, we use the canonical specification, that is, we want to search in the local neighborhoods ($L_\infty$  norm balls) of $x$, formally denoted as $\pertx := \{x' \mid ||x-x'||_{\infty} \leq \epsilon \}$, where $\epsilon$ is the radius. For certain $\epsilon$, given we know $x$ belongs to class $j$, we say an adversarial point is found if:
\begin{align}
\label{eq:robustness_ori}
    \exists x' \in \pertx \quad  \exists i \in C \quad F_i(x') - F_j(x') > 0
\end{align}
In practice, the change from the original data $x$ to adversarial data $x'$ should be imperceptible, so they are more likely to be recognized as the same class/label from a human perspective. There are also metrics other than the $L_\infty$ norm to represent the "similarity" between $x$ and $x'$, such as the $L_0$ and $L_2$ norms \citep{Xu2020AdversarialAA}. However, almost all of them fall into the local neighborhood specification paradigm. This is different from the NAP specification, as we will discuss later.

Neural networks are vulnerable to adversarial attacks, where even imperceptible changes can alter predictions significantly. This underscores the critical need for neural network verification, which can be viewed as highlighting safe regions that excludes adversarial regions. Solving the verification problem involves formally proving the absence of adversarial points in $\pertx$. Formally, the verification problem seeks to prove:
\begin{align}
\label{eq:robustness_ori}
    \forall x' \in \pertx \quad \forall i \neq j \quad F_j(x') - F_i(x') > 0
\end{align}
For a simpler presentation, we assume that $F(x)$ is a binary classifier. For any given specification $\pertx$, $F(x) \geq 0$ indicates that the model is verified; otherwise, we can find an adversarial example. Solving such a verification problem is known to be NP-hard \cite{reluplex}, and achieving scalability in verification remains an ongoing challenge.

\subsection{Neuron Abstractions and Neural Activation Patterns}
To better discuss NAP specification and robustness verification, we first introduce the relevant concepts of neuron abstractions, neuron abstraction functions, and neural activation patterns.

\paragraph{Neuron Abstractions} For a neural network \( N \), an internal neuron \( N_{i,l} \) (where \( 0 \leq i \leq d_l \) and \( 1 \leq l \leq L-1 \)) can have its post-activation value \( \hat{z}_i^{(l)}(x) \) abstracted into finite states. This abstraction is a mapping from \( \mathbb{R} \) to a set \( \mathbb{S} = \{s_1, s_2, \ldots, s_n\} \), representing neuron states. A simple binary abstraction for ReLU activation defines two states: \( s_0 := 0 \) (deactivation) and \( s_1 := (0, \infty) \) (activation). Further, these states can be abstracted into a unary state \( s_* = [0, \infty) \), covering the entire range of post-activation. This leads to a partial order: \( s_* \preceq s_0 \) and \( s_* \preceq s_1  \), where \( s_0 \) and \( s_1 \) refine \( s_* \). For simplicity, we use \( \mathbf{0}, \mathbf{1}, \mathbf{*} \) to represent these states. Our specification learning approach applies to any activation function definable within the abstraction domain, though this study focuses on ReLU.

\vspace{-5pt}


\begin{definition}[Neuron Abstraction Function] Given a neural network $N$ and the abstraction state set $\mathbb{S}$. A neuron abstraction function is a mapping $\mathcal{A}: N \rightarrow \mathbb{S}$. Formally, for an arbitrary neuron $N_{i, l}$, the function abstracts $N_{i, l}$ to some state $s_k \in \mathbb{S}$, i.e., $\mathcal{A}(N_{i, l}) = s_k$.
\end{definition}


The above characterization of neuron abstraction does not instruct us on how to perform binary abstraction in the absence of neuron values. Therefore, we introduce two types of $\mathcal{A}$: unary $\dot{\mathcal{A}}$ and binary $\ddot{\mathcal{A}}$, both of which take a single input $x$ as a parameter, omitting it when the context is clear.

\paragraph{The unary $\dot{\mathcal{A}}$ function} $\dot{\mathcal{A}}$ always maps any input to the coarsest state $\mathbf{*}$. Formally:
\begin{align}
\label{eq:id_func}
\dot{\mathcal{A}}(N_{i, l}, x ) = \mathbf{*}
\end{align}
\paragraph{The binary $\ddot{\mathcal{A}}$ function} When an input $x$ is passed through an internal neuron $N_{i, l}$, we can easily determine the binary abstraction state of $N_{i, l}$ based on its post-activation value $\hat{z}_i^{(l)}(x)$. This motivates us to define the binary abstraction function as follows: 
\begin{align}
\label{eq:id_func}
\ddot{\mathcal{A}}(N_{i, l}, x ) =
\begin{cases}
  \mathbf{0} & \text{if }  \hat{z}_i^{(l)}(x) = 0 \\
  \mathbf{1} & \text{if } \hat{z}_i^{(l)}(x) > 0 \\ 
\end{cases}
\end{align}

\begin{figure}[t]
    \centering
     \begin{subfigure}[t]{0.15\textwidth}
         \centering         
         \includegraphics[width=\textwidth]{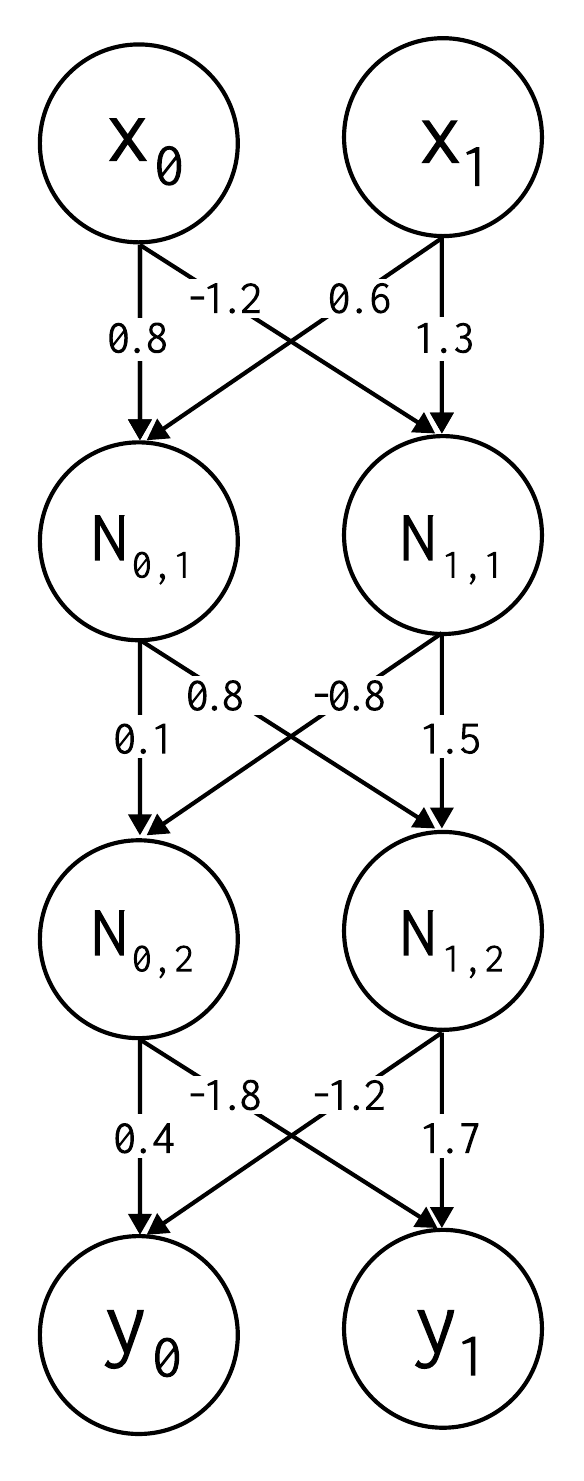}
         \caption{A simple 2x2 fully connected neural network.}
         \label{fig:model_arch}
     \end{subfigure}
     \hfill
     \begin{subfigure}[t]{0.84\textwidth}
         \centering         \includegraphics[width=\textwidth]{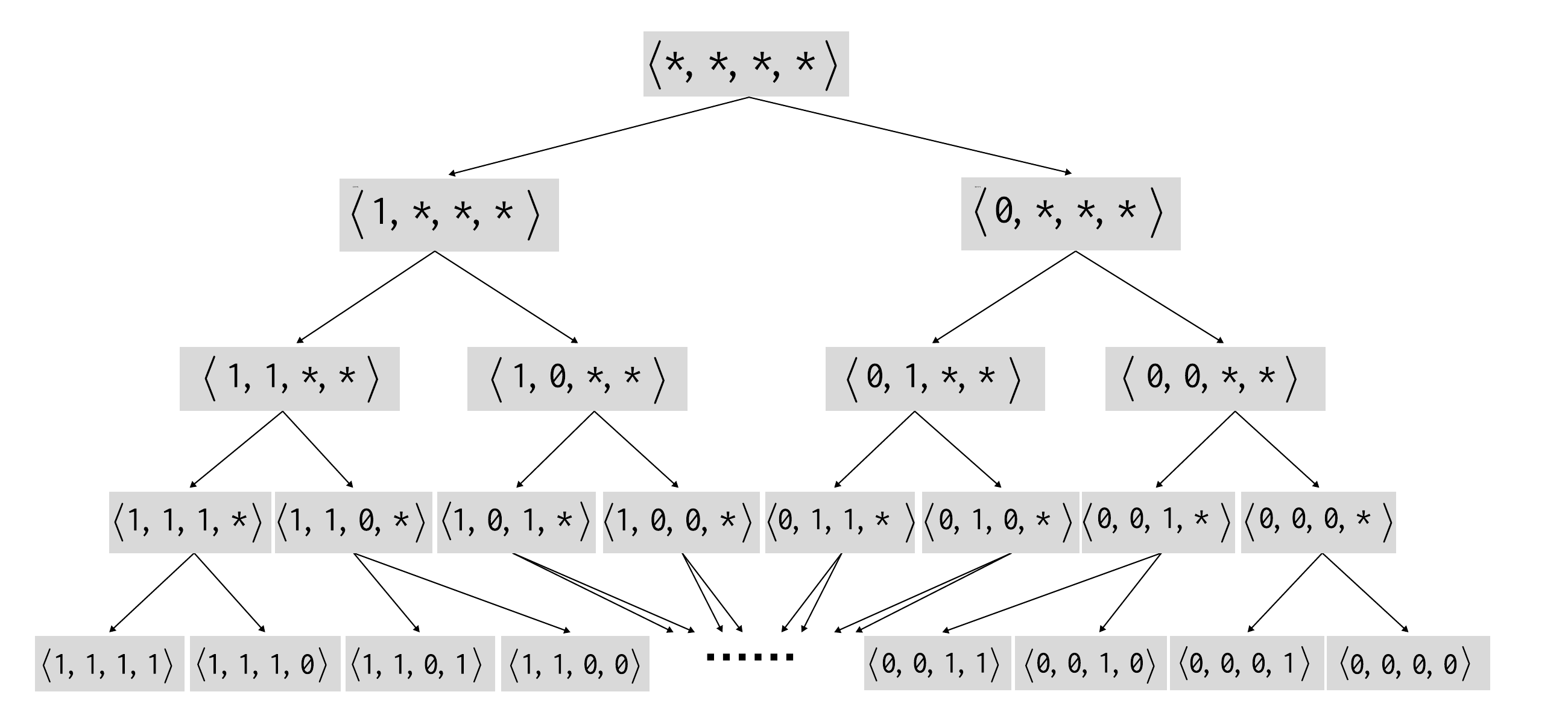}
         \caption{A subset of the NAP $\mathcal{P}$ family forms a binary tree. Each child subsumes its parent, and the root is the most abstracted NAP. The leaf node represents the most refined NAPs, where each neuron is either $\mathbf{1}$ or $\mathbf{0}$. From the root, the tree is spanned by abstracting each neuron in an order of $N_{0,1}, N_{1,1}, N_{0,2}, N_{1,2}$. Setting a different order will create a different tree of NAPs. } 
         \label{fig:binary_tree}
         \hfill
     \end{subfigure}
     \caption{A simple 2x2 fully connected neural network and a subset of its NAP in a binary tree structure.}
\end{figure}





\vspace{-5pt}

\begin{definition}[Neural Activation Pattern] 
Given a neural network $N$, an input $x$, and neuron abstraction functions $\ddot{\mathcal{A}}$ and $\dot{\mathcal{A}}$, a neural activation pattern (NAP) $P$ is a tuple representing the abstraction state of all neurons in $N$. Formally, $P := \langle \mathcal{A}(N_{i, l},x) \mid N_{i, l} \in N \rangle$, where $\mathcal{A}$ is either $\ddot{\mathcal{A}}$ or $\dot{\mathcal{A}}$, chosen for each neuron. We also denote \( P \) as \( \mathcal{A}(N, x) \). The abstraction state of neuron \( N_{i, l} \) is represented as \( P_{i, l} \), specifically \( P_{i, l} = \mathcal{A}(N_{i, l}, x) \). When representing a refined activation pattern triggered by \( x \) where \( \ddot{\mathcal{A}} \) is applied to each neuron, it is expressed as \( \ddot{\mathcal{A}}(N, x) \). 
\end{definition}




We denote the power set of NAPs in $N$ as $\mathcal{P}$. The number of all possible NAPs in $N$ scales exponentially as the number of neurons increases. For such a large set, if we aim to find the minimal NAP --- the central problem in this work, we first have to establish an order so that NAPs can be compared. To this end, we define the following partial order: 



\begin{definition}[Partially ordered NAP] For any given two NAPs $P, P' \in \mathcal{P}$, we say $P'$ subsumes $P$ if, for each neuron $N_{i, l}$, its state in $P$ is an abstraction of that in $P'$. Formally, this can be defined as:
\begin{align}
    P' \preccurlyeq P \text{ } \iff \text{ }  P'_{i, l} \preceq P_{i, l} \text{ }  \text{ }  \forall N_{i, l} \in N
\end{align}

Moreover, two NAPs $P, P'$ are equivalent if 
$P \preccurlyeq P'$ and $P' \preccurlyeq P$.
\end{definition}


To give a concrete example, Figure \ref{fig:binary_tree} depicts a subset of all possible NAPs of a simple neural network consisting of 2 hidden layers and 4 neurons, as presented in Figure \ref{fig:model_arch}. It is noteworthy that a subset of the NAP family \( \mathcal{P} \) can form a complete binary tree based on the order of abstraction in neurons. For example, using the order \( N_{0,1}, N_{1,1}, N_{0,2}, N_{1,2} \) creates a specific tree of NAPs; different orders will yield different trees. The root of this tree is the coarsest NAP, \( \langle \mathbf{*}, \mathbf{*}, \mathbf{*}, \mathbf{*} \rangle \). Increasing the tree's depth means that \( \ddot{\mathcal{A}} \) applies to more neurons, and at the leaf nodes, all neurons are abstracted by \( \ddot{\mathcal{A}} \). The leaf nodes represent the most refined NAPs, totaling \( 2^{|N|} \). Additionally, each parent node subsumes its child, meaning a leaf node will always be subsumed by its ancestors along the path. For example, we have the relationship: 
\[
\langle \mathbf{*}, \mathbf{*}, \mathbf{*}, \mathbf{*} \rangle \preccurlyeq 
\langle \mathbf{1}, \mathbf{0}, \mathbf{*}, \mathbf{*} \rangle \preccurlyeq 
\langle \mathbf{1}, \mathbf{0}, \mathbf{1}, \mathbf{*} \rangle \preccurlyeq 
\langle \mathbf{1}, \mathbf{0}, \mathbf{1}, \mathbf{0} \rangle.
\]
However, children under the same parent are not comparable, as seen in \( \langle \mathbf{1}, \mathbf{0}, \mathbf{*}, \mathbf{*} \rangle \undernegpreccurlyeq \langle \mathbf{1}, \mathbf{1}, \mathbf{*}, \mathbf{*} \rangle \). This occurs because these NAPs reside on different branches formed by splitting certain ReLUs.
\begin{figure}[h]
    \centering
  \includegraphics[width=0.8\textwidth]{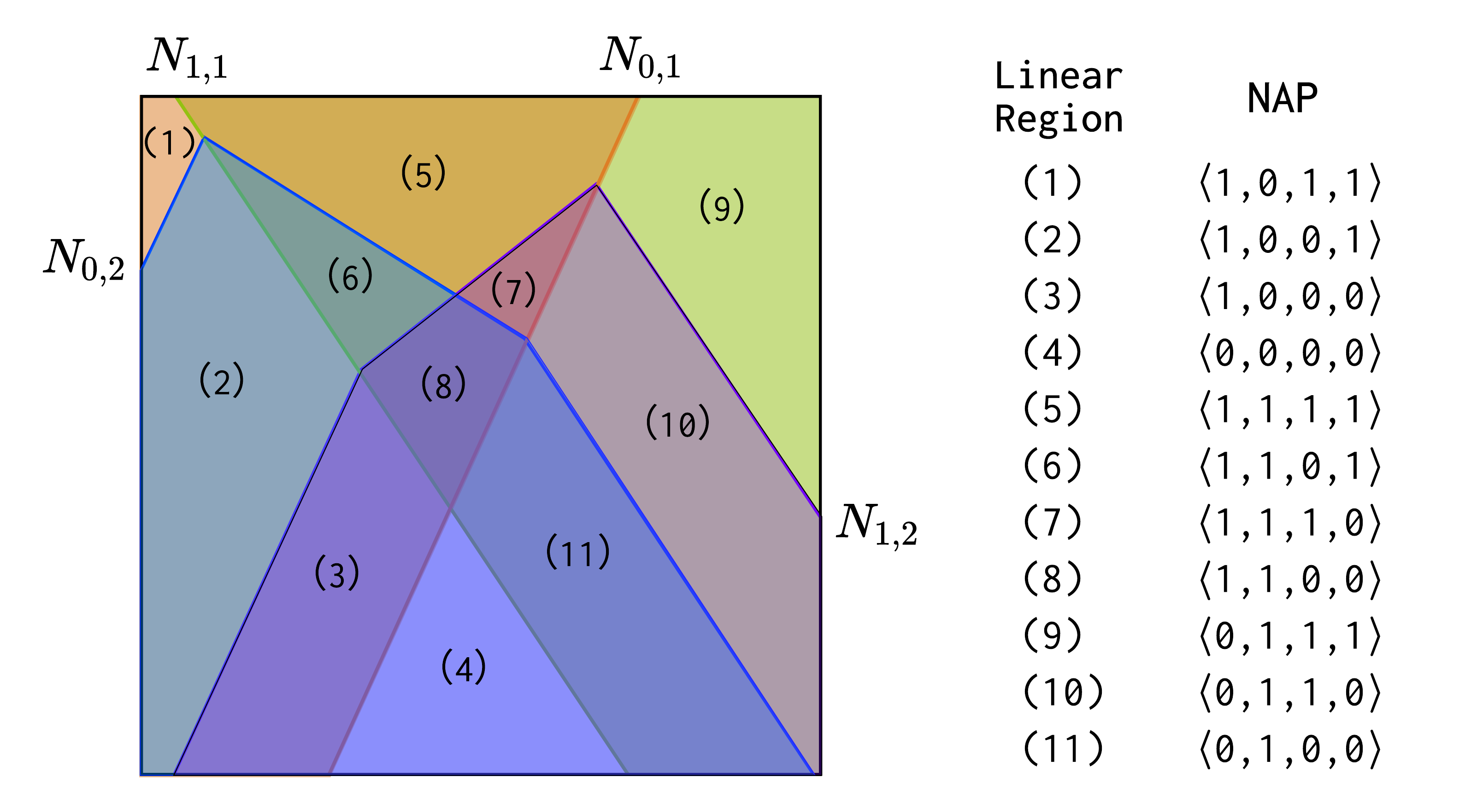}    \caption{Linear regions are shattered by the simple 2X2 neural network. Each linear region corresponds to a most refined NAP, but not necessarily vice versa.  More abstracted NAPs are formed by ignoring lines/hyperplanes created by neurons.}
    \label{fig:NAP_regions}
\end{figure}
\vspace{-1em}
\paragraph{Regions Outlined by NAPs} A key requirement for verification specifications is their ability to represent specific regions within the input space. Canonical local neighbor specifications define \( L_\infty \) norm balls using explicit formulas, such as \( \pertx := \{x' \mid ||x-x'||_{\infty} \leq \epsilon \} \). In contrast, NAP specifications implicitly outline certain regions in the input space. We define the regions specified by \( P \) as \( R_P \), which represents the set of inputs whose activation patterns subsumed by the given NAP \( P \). Formally, \( R_P := \{ x \mid P \preccurlyeq \ddot{\mathcal{A}}(N,x) \} \). For instance, Figure \ref{fig:NAP_regions} illustrates the NAP family of the simple neural network shown in Figure \ref{fig:model_arch}. These NAPs correspond to regions bounded by hyperplanes created by neurons, with the most refined NAPs representing individual linear regions from (1) to (11). For example, linear region (9) corresponds to \( \langle\mathbf{0},\mathbf{1},\mathbf{1},\mathbf{1}\rangle \). The coarsest state \( \mathbf{*} \) abstracts the binary states \( \mathbf{0} \) and \( \mathbf{1} \), allowing a NAP with more \( \mathbf{*} \) to cover a larger, potentially concave region in the input space. For instance, the NAP \( \langle\mathbf{*},\mathbf{*},\mathbf{1},\mathbf{*}\rangle \) corresponds to the union of linear regions (1), (5), (7), (9), and (10). Notably, the number of linear regions is less than the size of the NAP family, as reported in \cite{Geng_scalar,LR1,LR2}.

\subsection{NAP Specifications for Robustness Verification}
\label{sec:NAP_property}
The previously introduced NAP definition primarily serves as a conceptual tool. To function as a specification for robustness verification, the regions outlined by NAPs (\( R_P \)) should cover a significant amount of data from a specific class. We demonstrate how to achieve this using the $\widetilde{\mathcal{A}}$ function.

\paragraph{The statistical $\widetilde{\mathcal{A}}$ function}
The binary $\ddot{\mathcal{A}}$ function is limited to processing a single input and encounters challenges when multiple inputs are involved, as two distinct inputs may lead to conflicting abstraction states for a neuron; for instance, a neuron may be deactivated for input \( x_1 \) but activated for another input \( x_2 \). This can be problematic when attempting to learn a general NAP for an entire class of inputs. To address this issue, we approach it statistically by introducing the $\widetilde{\mathcal{A}}$ function, defined as follows:

\begin{align}
\label{eq:stat_func}
 \widetilde{\mathcal{A}}(N_{i, l}, X) =
\begin{cases}
  \mathbf{0} & \text{if } \frac{|\{x_j|\ddot{\mathcal{A}}(N_{i, l}, x_j ) = \mathbf{0}, \text{ } x_j \in X\}|}{|X|} \geq \delta \\
  \mathbf{1} & \text{if } \frac{|\{x_j|\ddot{\mathcal{A}}(N_{i, l}, x_j ) = \mathbf{1}, \text{ } x_j \in X\}|}{|X|} \geq \delta \\ 
  \mathbf{*} & \text{otherwise}\\ 
\end{cases}
\end{align}
where \(\delta\) is a real number from \([0,1]\), and \(X\) represents a set of inputs, i.e., \(X := \{x_1, x_2, \ldots, x_n\}\). Since datasets often contain noisy data or challenging instances that the model cannot predict accurately, we introduce the parameter \(\delta\) to accommodate standard classification settings in which Type I and Type II errors are non-negligible. Intuitively, the introduction of $\delta$ allows multiple inputs $x_1, ..., x_n$ to vote on a neuron's state. For instance, when $\delta$ is set to 0.99, then 99\% or more of the inputs must agree that a neuron is activated for the neuron to be in the $\mathbf{1}$ state. It is worth mentioning that the statistical \(\widetilde{\mathcal{A}}\) function is equivalent to the method described in \citet{geng23}.

\begin{definition}[Class NAP]
In a classification task with the class set \(C\), for any class \(c \in C\), a class NAP \(P^c\) is a NAP comprising abstract states outputted by an abstraction function given \(X_c\), where \(X_c\) denotes the set of inputs belonging to class \(c\). Formally, \(P^c := \langle \mathcal{A}(N_{i, l}, X_c) \mid N_{i, l} \in N \rangle\), where $\mathcal{A}$ is either $\widetilde{\mathcal{A}}$ or $\dot{\mathcal{A}}$, chosen for each neuron. The power set of  \(P^c\) is denoted as $\mathcal{P}^c$.
\end{definition}

Robustness verification involves proving that no adversarial examples exist in the local neighborhood of a reference point \(x\). For NAP specifications, this translates to showing that no adversarial examples exist in class NAP \(P^c\); in other words, inputs exhibiting \(P^c\) must be classified as \(c\). \citet{geng23} argues that class NAPs must meet several key requirements to qualify as NAP specifications, and finding such NAPs effectively verifies the underlying robustness problem. We formalize these requirements into the following properties:


\paragraph{The non-ambiguity property}
Since we want class NAPs to serve as certificates for a certain class, they must be distinct from each other. Otherwise, there could exist an input that exhibits two class NAPs, which leads to conflicting predictions. Formally, we aim to verify the following:
\begin{align}
    \forall x \quad \forall c_1, c_2 \in C \text{ s.t. } c_1 \neq c_2  \quad P^{c_1} \preccurlyeq  \ddot{\mathcal{A}}(N, x)    \Longrightarrow
    P^{c_2} \undernegpreccurlyeq \ddot{\mathcal{A}}(N, x)
\end{align}
From a geometric perspective, there must be no overlaps between class NAPs. In other words, it is also equivalent to verifying:
\[
\forall c_1, c_2  \in C \text{ s.t. } c_1 \neq c_2 \quad R_{P^{c_1}} \bigcap R_{P^{c_2}} = \emptyset
\]
\paragraph{The NAP robustness property} To serve as NAP specifications, class NAPs $P^c$ ensure that if an input exhibits it, i.e., $P  \preccurlyeq  \ddot{\mathcal{A}}(N,x)$, the input must be predicted as the corresponding class $c$. Formally, we have:
\begin{align}
    \forall x \in R_{P^{c}} \quad \forall k \in C \text{ s.t. } k\neq c \quad  F_c(x) - F_k(x) > 0
\end{align}
in which 
\begin{align}
    R_{P^{c}}= \{ x \mid {  P^c \preccurlyeq \ddot{\mathcal{A}}(N,x)}  \}
\end{align}

\paragraph{The NAP-augmented robustness property} Rather than relying exclusively on class NAPs as specifications, local neighbors can also be used in combination for verification. This hybrid approach offers several advantages: 1) It narrows the scope of verifiable regions when class NAPs alone cannot satisfy the NAP robustness property; 2) NAP constraints essentially lock ReLU states, thereby refining the search space for verification tools; 3) It emphasizes verifying valid test inputs, while still covering broader and more flexible verifiable regions than those defined by $L_\infty$ norm ball specifications alone. Formally, this property can be expressed as:
\begin{align}
    \forall x' \in B(x, \epsilon) \bigcap   R_{P^{c}} \quad  \forall k \in C \text{ s.t. } k\neq c \quad   F_c(x) - F_k(x) > 0
\end{align}
in which 
\begin{align}
    B(x, \epsilon) = \{ x' \mid ||x-x'||_\infty \leq \epsilon \}  \quad R_{P^{c}}= \{ x' \mid { P^c \preccurlyeq \ddot{\mathcal{A}}(N,x')} \}
\end{align}
To summarize, we state that a class NAP can serve as a NAP specification if it satisfies either the NAP robustness property or the NAP-augmented robustness property. Clearly, the former property is stronger, and it is possible that we can't find a class NAP \(P^c\) in \(\mathcal{P}^c\) to satisfy this property. Fortunately, we can always find NAPs that satisfy the latter property by narrowing the verifiable region with additional $L_\infty$ norm ball specifications. 

\section{The minimal NAP specification problem}


\label{problemformulation}
In this section, we formulate the problem of learning minimal NAP specifications and present both a deterministic and statistical approach to address it. As our methods involve interactions with verification tools, we first introduce the relevant notations. 



\subsection{Problem Formulation}
Let $(\mathcal{P}^c, \preccurlyeq)$ be a partially ordered set corresponding to a family of class NAPs regarding some class $c \in C$. For simplicity, we omit the superscript \(c\) and refer to class NAPs simply as NAPs when the context is clear. We assume access to a verification tool, $\mathcal{V}: \mathcal{P} \to \{0, 1\}$, which maps a class NAP $P \in \mathcal{P}$ to a binary set. Here, $\mathcal{V}(P) = 1$ denotes a successful verification of the underlying robustness query, while 0 indicates the presence of an adversarial example. From an alternative perspective, $\mathcal{V}(P) = 1$ also signifies that $P$ is a NAP specification, i.e., it satisfies NAP(-augmented) robustness properties; whereas $\mathcal{V}(P) = 0$ implies the opposite.

It is not hard to see that $\mathcal{V}$ is monotone with respect to the NAP family $(\mathcal{P}, \preccurlyeq)$. Given $P' \preccurlyeq P$ and  $\mathcal{V}(P') = 1$, it follows that $\mathcal{V}(P) = 1$. However, given $P' \preccurlyeq P$ and $\mathcal{V}(P) = 1$, we cannot determine $\mathcal{V}(P')$. In other words, refining a NAP (by increasing the number of neurons abstracted to  \(\mathbf{0}\) or \(\mathbf{1}\)) can only enhance the likelihood of successful verification of the underlying robustness query. 


\begin{definition}[The minimal NAP specification problem]
Given a family of NAPs $\mathcal{P}$ and a verification tool $\mathcal{V}$, the minimal NAP specification problem is to find a NAP $P$ such that:
\[
\forall P' \preccurlyeq P, \, P' \neq P \implies \mathcal{V}(P') = 0
\]
That is, when $P$ is minimal, any NAP $P'$ that is strictly coarser than $P$ will result in $\mathcal{V}(P') = 0$. The size of $P$, denoted by \(|P|\) or $s$, represents the number of neurons abstracted to \(\mathbf{0}\) or \(\mathbf{1}\), i.e., $|\{ N_{i, l} \mid P_{i, l} = \mathbf{0} \text{ or } \mathbf{1} \}|$.
\end{definition}



It is important to note that "minimal" refers to the level of abstraction, not the size of the NAP. Since $(\mathcal{P}^c, \preccurlyeq)$ is a partially ordered set, multiple minimal NAP specifications may exist, rather than a single global minimum. In such cases, selecting any one of the minimal NAPs is sufficient. However, it is possible that even the most refined NAPs cannot verify the robustness query, in which case no minimal NAP specification exists. Given the computational cost of using verification tools, we are particularly interested in methods that efficiently find a minimal NAP specification, minimizing the number of calls to $\mathcal{V}$.

\subsection{Conservative Bottom Up Approach}

We introduce \textsc{Coarsen}, a method that starts with the most refined NAP and gradually coarsens it to learn minimal NAP specifications. Before proceeding, we define the relevant notations.

Given a class input \(X_c\), the coarsest NAP is defined as the one that uses \(\dot{\mathcal{A}}\) to abstract each neuron in \(N\), denoted as \(\dot{P} := \langle \dot{\mathcal{A}}(N_{i, l},X_c) \mid N_{i, l} \in N \rangle\). This NAP is the smallest possible in size, with \(|\dot{P}| = 0\). Conversely, the most refined NAP applies \(\widetilde{\mathcal{A}}\) to abstract each neuron, denoted as \(\widetilde{P} := \langle \widetilde{\mathcal{A}}(N_{i, l},X_c) \mid N_{i, l} \in N \rangle\) or $\widetilde{\mathcal{A}}(N)$. This NAP is the largest in size, with \(|\widetilde{P}| \leq |N|\). Clearly, we have \(\dot{P} \preccurlyeq \widetilde{P}\).

To refine any NAP \(P\) through a specific neuron \(N_{i, l}\), we apply the \(\widetilde{\mathcal{A}}\) function, denoting this refinement as \(\widetilde{\Delta}(N_{i, l})\). This action may either increase or leave \(|P|\) unchanged. Conversely, we denote the coarsen action as \(\dot{\Delta}(N_{i, l})\), which will either decrease or leave \(|P|\) unchanged.

The \textsc{Coarsen} approach starts by verifying if the most refined NAP \(\widetilde{P}\) successfully passes verification. For each neuron, we attempt to coarsen it using \(\dot{\mathcal{A}}\). If the resulting NAP no longer verifies the query, we revert to refining it back using \(\widetilde{\mathcal{A}}\); otherwise, we retain the coarsened NAP. The procedure is detailed in Algorithm \ref{alg:Coarsen}, which may require up to \(|N|\) calls to \(\mathcal{V}\) in the worst case, as shown in Theorem \ref{thm:theorem_simplecoarsen}. Please refer to the proof in Appendix \ref{appendix:proof_simple}.

\begin{algorithm2e}[H]
    \caption{ \textsc{Coarsen}}
    \label{alg:Coarsen}
    \small
    \DontPrintSemicolon
    \SetKwProg{Fn}{Function}{}{end}
    \SetKwFunction{Coarsen}{Coarsen}
    \KwIn{The neural network $N$ }
    \KwOut{A minimal NAP specification $P$}
    \Fn{Coarsen(N)}{
    $P \leftarrow  \widetilde{\mathcal{A}}(N)$ \\
    \uIf{$\mathcal{V} (P) == 0$}{
        \Return $None$ ; \tcc*{Return None if the most refined NAP fails verification}
    }
    \Else{
        \For{$N_{i, l}$ \textbf{in} $N$}{
            $P \leftarrow \dot{\Delta}(N_{i, l})$ ; \tcc*{Try to coarsen(abstract) this neuron using $\dot{\Delta}$}
            \uIf{$\mathcal{V} (P) == 0$}{
                $P \leftarrow \widetilde{\Delta}(N_{i, l})$ ; \tcc*{Refine the neuron back if the verification fails}
            }
        }
        \Return $P$;
    }}
\end{algorithm2e}

\begin{theorem}
\label{thm:theorem_simplecoarsen}
The algorithm \textsc{Coarsen} returns a minimal NAP specification with $\mathcal{O}(|N|)$ calls to $\mathcal{V}$.
\end{theorem}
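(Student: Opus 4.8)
The plan is to split the argument into two independent parts: the complexity bound and the correctness claim that the returned $P$ is a genuine minimal NAP specification. The complexity part is routine. The algorithm issues exactly one call to $\mathcal{V}$ in the initial guard and then exactly one call per neuron inside the loop (the check after $\dot{\Delta}$), since each neuron is visited once and the revert branch applying $\widetilde{\Delta}$ issues no additional verification query. Hence the total number of calls is $1 + |N| = \mathcal{O}(|N|)$, independent of the outcome of any individual check. I would also dispatch the degenerate case here: if the guard fails, then $\mathcal{V}(\widetilde{P}) = 0$, and since $\widetilde{P}$ is the most refined NAP, monotonicity forces $\mathcal{V}(P') = 0$ for every $P' \preccurlyeq \widetilde{P}$, i.e. for every NAP; returning $None$ is therefore correct, as no specification exists.

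For correctness I would first establish the invariant $\mathcal{V}(P) = 1$. After the guard, $\mathcal{V}(\widetilde{P}) = 1$. Inductively, assume $\mathcal{V}(P) = 1$ at the start of the iteration processing $N_{i,l}$; the iteration either keeps the coarsened neuron only when $\mathcal{V}$ still returns $1$, or reverts $N_{i,l}$ via $\widetilde{\Delta}$ to exactly its pre-iteration state, so $\mathcal{V}(P) = 1$ is preserved either way. Thus the returned $P$ satisfies $\mathcal{V}(P) = 1$, so it is a NAP specification.

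The heart of the proof is minimality, which I would derive from a per-neuron key lemma: for every neuron $N_{i,l}$ whose state in the final $P$ is $\mathbf{0}$ or $\mathbf{1}$, the NAP $P^{-i,l}$ obtained by coarsening just that neuron to $\mathbf{*}$ satisfies $\mathcal{V}(P^{-i,l}) = 0$. To prove the lemma, let $P^{(k)}$ denote the value of $P$ just before the iteration processing $N_{i,l}$. Because each iteration either leaves $P$ unchanged or replaces it with a $\preccurlyeq$-smaller NAP, the sequence of $P$-values is monotonically non-increasing in $(\mathcal{P}, \preccurlyeq)$; in particular the final $P$ satisfies $P \preccurlyeq P^{(k)}$. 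Since $N_{i,l}$ ends in $\mathbf{0}$ or $\mathbf{1}$, its coarsening attempt must have failed, so the NAP $Q$ obtained from $P^{(k)}$ by setting $N_{i,l}$ to $\mathbf{*}$ has $\mathcal{V}(Q) = 0$. Comparing coordinates, $P^{-i,l}$ agrees with $Q$ at $N_{i,l}$ (both $\mathbf{*}$) and is $\preceq$ at every other neuron (as $P \preccurlyeq P^{(k)}$), hence $P^{-i,l} \preccurlyeq Q$. The contrapositive of the monotonicity of $\mathcal{V}$ then yields $\mathcal{V}(P^{-i,l}) = 0$, proving the lemma.

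Finally I would assemble minimality from the lemma. Take any $P' \preccurlyeq P$ with $P' \neq P$. Since the only strict relations in the state lattice are $\mathbf{*} \prec \mathbf{0}$ and $\mathbf{*} \prec \mathbf{1}$, some neuron $N_{i,l}$ has $P'_{i,l} = \mathbf{*}$ while $P_{i,l} \in \{\mathbf{0}, \mathbf{1}\}$, which gives $P' \preccurlyeq P^{-i,l}$; applying monotonicity once more with $\mathcal{V}(P^{-i,l}) = 0$ yields $\mathcal{V}(P') = 0$, exactly the minimality condition. The one genuinely delicate point—and the step I expect to be the main obstacle—is that the lemma concerns coarsening $N_{i,l}$ in the \emph{final} $P$, whereas the failure was observed in the \emph{intermediate} $P^{(k)}$; bridging this gap is precisely what the monotonicity of the $P$-sequence together with the contrapositive of the monotonicity of $\mathcal{V}$ accomplishes.
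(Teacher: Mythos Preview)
Your proof is correct and follows the same overall strategy as the paper's: show that coarsening any single neuron still present in the final $P$ makes verification fail, then extend to all strictly coarser $P'$ via monotonicity. In fact your argument is more careful than the paper's terse version, since you explicitly bridge the gap between the failure observed at the intermediate $P^{(k)}$ and the claim about the final $P$ using the $\preccurlyeq$-monotone decrease of the $P$-sequence together with the contrapositive of monotonicity of $\mathcal{V}$---a step the paper simply asserts with ``According to the algorithm, $\mathcal{V}(P')=0$.''
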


\subsection{Statistical Coarsen Approach}

The \textsc{Coarsen} algorithm initiates with the most refined NAP and progressively coarsens each neuron until verification fails. Enhancing the algorithm's performance is possible by coarsening multiple neurons during each iteration. However, a fundamental question emerges: How do we determine which \textit{set} of neurons to coarsen in each round? 

We present \textsc{StochCoarsen} to answer this question. In this approach, we assume that each neuron is independent of the others and select neurons to coarsen in a statistical manner. Specifically, in each iteration, we randomly coarsen a subset of refined neurons in the current NAP simultaneously to see if the new NAP can pass verification. We repeat this process until the NAP size reaches $s$. This method can be regarded as optimistic bottom up. Algorithm \ref{alg:samplecoarsen} provides the pseudocode for \textsc{StochCoarsen}.

It's important to note that the stochastic manner of this algorithm faces the challenge of sample efficiency. For instance, if a minimal NAP specification \(P\) of size \(s\) is identified after one iteration, the probability of selecting the exact \(s\) essential neurons in \(P\) is \(\theta^{s}\). Consequently, if we set \(\theta\) as a constant, the expected number of samples required to find the NAP becomes \((\frac{1}{\theta})^{s}\).

\textsc{StochCoarsen} allows us to narrow down the estimated essential neurons by an expected factor of \(\theta\) once a valid NAP is learned. Thus, the expected number of samples and iterations are inversely related, with their product equating to the total calls to \(\mathcal{V}\). By setting \(\theta = e^{-\frac{1}{s}}\), we can achieve polynomial expected samples in \(s\) while minimizing the total number of calls to \(\mathcal{V}\), as proven in Theorem \ref{thm:theorem_coarsen}. The proof is detailed in Appendix \ref{appendix:proof_stats}.

\begin{algorithm2e}[H]
    \caption{\textsc{StochCoarsen}}
    \label{alg:samplecoarsen}
    \small
    \DontPrintSemicolon
    \SetKwProg{Fn}{Function}{}{end}
    \SetKwFunction{Coarsen}{Coarsen}
    \KwIn{The neural network $N$, the probability $\theta$, and the size $s$}
    \KwOut{A minimal NAP specification $P$}
    
    \Fn{StochCoarsen($mand\_neurons$, $\theta$, $s$)}{
    $P \leftarrow \widetilde{\mathcal{A}}(N)$; $mand\_neurons \leftarrow N$\;
    \uIf{$\mathcal{V}(P) == 0$}{
        \Return $None$ \tcc*{Return None if the most refined NAP fails verification}
    }
    \Else{
        \While{$|P| > s$}{
            $P \leftarrow Sample\_NAPs(mand\_neurons , \theta)$\;
            \uIf{$\mathcal{V}(P) == 1$ }{ 
                $found\_neurons \leftarrow \emptyset$ \;
                \For{$N_{i, l}$ \textbf{in} $N$}{
                    \uIf{$P_{i, l} == \widetilde{\mathcal{A}}(N_{i, l})$} {
                        $found\_neurons  \leftarrow found\_neurons  \cup \{N_{i, l}\}$ 
                    }
                }$mand\_neurons \leftarrow found\_neurons$  \tcc*{Reduce search space}
            }
            \Else{
                $P \leftarrow Sample\_Naps(mand\_neurons , \theta)$ \tcc*{Sample a new NAP}
            }
        }
        \Return $P$ \tcc*{Return the minimal NAP of size $s$}
    }
    }
\end{algorithm2e}

\begin{theorem}
\label{thm:theorem_coarsen}
With probability $\theta$=$e^{-\frac{1}{s}}$, \textsc{StochCoarsen} learns a minimal NAP specification with $\mathcal{O}(slog|N|)$ calls to $\mathcal{V}$.
\end{theorem}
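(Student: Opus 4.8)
The plan is to decouple the cost of \textsc{StochCoarsen} into two independent quantities: the expected number of samples (calls to $\mathcal{V}$) needed to obtain one \emph{successful} verification, and the number of such successes required before $|P|$ shrinks to $s$. Fix a target minimal NAP specification $P^\star$ of size $s$, and let $E$ be its $s$ essential neurons (those abstracted to $\mathbf{0}$ or $\mathbf{1}$), and write $M$ for the current candidate set (the variable $mand\_neurons$). By the monotonicity of $\mathcal{V}$ established in the problem formulation, any sampled NAP that keeps every neuron of $E$ refined subsumes $P^\star$ and hence verifies. Since the sampling step retains each neuron of $M$ independently with probability $\theta$, and since the invariant $E\subseteq M$ is preserved across iterations (each update replaces $M$ by the refined neurons of a verified NAP, which contain $E$), the probability that one sample is successful is at least $\theta^{s}$. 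Thus the number of samples between consecutive successes is dominated by a geometric variable with success probability $\theta^{s}$, so the expected number of calls to $\mathcal{V}$ per success is at most $\theta^{-s}$.

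Next I would bound the number of successes. Let $R_t$ be the number of \emph{redundant} neurons, i.e.\ those in $M\setminus E$, remaining after $t$ successful iterations, with $R_0=|N|-s$. Conditioned on a success, each redundant neuron is retained exactly when it was kept refined, independently with probability $\theta$, so $\mathbb{E}[R_{t+1}\mid R_t]=\theta R_t$ and $\mathbb{E}[R_t]\leq \theta^{t}\,|N|$. The loop terminates once $R_t=0$, i.e.\ $|P|=s$. A single redundant neuron survives $t$ successes with probability $\theta^{t}$, so by a union bound the chance that any redundant neuron survives is at most $|N|\,\theta^{t}$, which falls below $1$ once $t>\ln|N|/\ln(1/\theta)$. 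Summing this tail shows the expected number of successful iterations is of order $\ln|N|/\ln(1/\theta)$, with only a lower-order additive $\mathcal{O}(s)$ coming from the geometric tail once the union bound becomes vacuous.

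Combining the two pieces by Wald's identity, the expected total number of calls to $\mathcal{V}$ is at most
\begin{align}
\label{eq:stochcost}
\theta^{-s}\cdot\frac{\ln|N|}{\ln(1/\theta)}.
\end{align}
The final step is to optimize the sampling parameter. Writing $u=\ln(1/\theta)>0$, the bound \eqref{eq:stochcost} equals $\bigl(e^{su}/u\bigr)\ln|N|$; differentiating $e^{su}/u$ gives a minimizer at $su=1$, i.e.\ $\theta=e^{-1/s}$. At this choice $\theta^{-s}=e$ and $\ln(1/\theta)=1/s$, so \eqref{eq:stochcost} becomes $e\,s\,\ln|N|=\mathcal{O}(s\log|N|)$, which is the claimed bound; the factor $\theta^{-s}=e$ also certifies the ``constant expected samples per success'' remark preceding the theorem.

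I expect the main obstacle to be the second step: turning the multiplicative mean decay $\mathbb{E}[R_{t+1}\mid R_t]=\theta R_t$, which only controls the average, into a genuine bound on the number of iterations until $R_t$ actually reaches $0$. The per-neuron survival (coupon-collector) view above is what makes this rigorous rather than merely in expectation. A secondary subtlety is the existence of multiple minimal specifications: a sample may verify by subsuming a minimal NAP other than $P^\star$, possibly dropping a neuron of $E$ from $M$ and breaking the invariant $E\subseteq M$. This does not harm correctness or the bound, since any accepted NAP only shrinks the search space and still retains the essential neurons of whichever minimal specification it subsumes, but a clean proof needs the invariant stated relative to the minimal specification actually tracked (re-anchoring $P^\star$ at each success), and I would make that bookkeeping explicit.
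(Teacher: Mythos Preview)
Your proposal is correct and rests on the same two ingredients as the paper's proof: (i) a sampled NAP verifies with probability at least $\theta^{s}$ because it suffices to retain the $s$ essential neurons of a fixed minimal specification $P^\star$, and (ii) each success shrinks the pool of redundant neurons by an expected factor $\theta$, so $\mathcal{O}(\log|N|/\log(1/\theta))$ successes suffice. You then optimize $\theta^{-s}/\log(1/\theta)$ to get $\theta=e^{-1/s}$ and the $\mathcal{O}(s\log|N|)$ bound, exactly as the paper does.

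Where you differ is in the bookkeeping. The paper sets up a recurrence $\mathcal{C}(n)\le \mathcal{C}(s+\theta(n-s))+\theta^{-s}$ for the expected number of calls starting from a candidate set of size $n$, appeals to concavity and Jensen to pass the expectation inside, and unrolls to obtain $\mathcal{C}(n)\le \theta^{-s}\log n/\log\theta^{-1}+s+1$. You instead decompose the total cost multiplicatively via Wald's identity into (expected samples per success)$\times$(expected number of successes), and handle the second factor with a per-neuron survival/union-bound argument rather than a recurrence. Your route is arguably more transparent and avoids the concavity step; the paper's recurrence formulation, on the other hand, sidesteps the conditioning subtlety you flag, because it analyzes the dominated process in which a sample is ``accepted'' only when $P^\star\preccurlyeq P$, so independence of redundant neurons from the acceptance event is immediate. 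Your proposed fix (re-anchoring $P^\star$ after each success, or equivalently coupling to that dominated process) is exactly what is needed to make the Wald step rigorous.
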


\paragraph{Setting the sample probability $\theta$} Setting $s$ poses a challenge in practice, as we assume that $s$ is always provided in \textsc{StochCoarsen}. However, this can be addressed by dynamically updating $\theta$ based on the result of $\mathcal{V}(P)$ \cite{minimal_abs}. With $\theta$ from theorem \ref{thm:theorem_coarsen}, \textsc{StochCoarsen} finds a NAP specification with probability $\left( e^{-1/s} \right)^s = e^{-1}$. Thus, we aim to set $\theta$ such that the $Pr(\mathcal{V}(P)=1) = e^{-1}$. Intuitively, if a sampled NAP $P$ is a specification, we decrease $\theta$ so more neurons will be coarsened. Similarly, if $P$ is not a specification, $\theta$ needs to be increased.

Given that $\theta \in [0,1]$, we can parameterize it using the Sigmoid function $\sigma(\lambda) = \left( 1 + e^{-\lambda} \right)^{-1}$, where $\lambda \in (-\infty,\infty)$. Since $Pr(\mathcal{V}(P)=1)$ depends on $\theta$ as well, we express it as a function of $\lambda$, \( g(\lambda) = Pr( \mathcal{V}(P) = 1) \). Then, setting $Pr(\mathcal{V}(P)=1) = e^{-1}$ can be achieved through the following minimization problem:

\begin{equation}
    L(\lambda) = \frac{1}{2}(g(\lambda) - e^{-1/s})^2
\end{equation}

The loss function $L(\lambda)$ can be minimized by statistical learning using stochastic gradient descent. With a step size \(\eta\), update $\lambda$ using $\lambda \leftarrow \lambda - \eta \frac{dL}{d\lambda} $. Note that $\frac{dL}{d\lambda}$ can be expressed as:

\begin{equation}
    {\frac{dL}{d\lambda}} = (g(\lambda) - e^{-1/s}) \frac{dg(\lambda)}{d\lambda}
\end{equation}

Given \( g(\lambda) = Pr( \mathcal{V}(P) = 1) \), we can replace $g(\lambda)$ with $\mathcal{V}(P)$ for stochastic gradient update. Additionally, since $\frac{dg(\lambda)}{d\lambda} > 0$, we simply ignore it as its multiplication effect can be represented by $\eta$.  Therefore, the final update rule is given by:
\begin{equation}
\lambda \leftarrow \lambda - \eta(\mathcal{V}(P) - e^{-1})
\end{equation}

\section{Optimistic Approach}
While \textsc{Coarsen} and \textsc{StochCoarsen} effectively identify minimal NAP specifications with correctness guarantees, their practicality is limited by the extensive runtime required to perform a large number of expensive verification calls. This limitation makes them less suitable for time-critical applications or large-scale neural networks. To address these challenges, we propose a optimistic approach for efficiently initializing NAP specifications. This approach significantly reduces the number of subsequent verification calls and provide a reliable upper bound on the size of the minimal NAP specification, improving both efficiency and scalability. The approach focuses on the concept of essential neurons, key to understanding the minimal NAP specification problem.

\begin{definition}[Essential neuron]
A neuron $N_{i,l} \in N$ is considered essential if it cannot be coarsened to $\mathbf{*}$ in any minimal NAP specification. We denote the set of all essential neurons as $E$, defined by:
\[
E = \{ N_{i,l} \mid P_{i,l} \in \{\mathbf{0}, \mathbf{1}\}, P \textit{ is minimal} \}
\]
Note that $E$ is the union of the set of essential neurons from all minimal NAP specifications. It follows that $|E| \geq s$, where $s$ denotes the size of the largest minimal NAP specification.
\end{definition}

The minimal NAP specification problem can be solved trivially if we gain access to $E$. Thus, our optimistic approach is designed to determine essential neurons and estimate $E$. To better understand our approach, we first discuss the properties of essential neurons.
Recall that verifying a robustness query given a NAP specification $P$ is equivalent to showing that $\mathbf{F}(x) \geq 0$ for input $x$ in region $R_P$. Thus, the necessary conditions of a essential neuron $N_{i,l}$ can be written as follows:

\begin{enumerate}
    \item If $N_{i, l}$ is in state $\mathbf{0}$,  it implies when $ \hat{z}_i^{(l)}(x) = 0, \mathbf{F}(x) \geq 0$. In addition, $\exists x \text{ s.t. } \hat{z}_i^{(l)}(x) > 0, \mathbf{F}(x) < 0$.
    \item If $N_{i, l}$ is in state $\mathbf{1}$, it implies $\forall x \text{ s.t. } \hat{z}_i^{(l)}(x) > 0, \mathbf{F}(x) \geq 0$. In addition, when  $ \hat{z}_i^{(l)}(x) = 0, \mathbf{F}(x) < 0 $. 
\end{enumerate}

\begin{figure}[b]
    \centering
     \begin{subfigure}[t]{0.23\textwidth}
         \centering         \includegraphics[width=\textwidth]{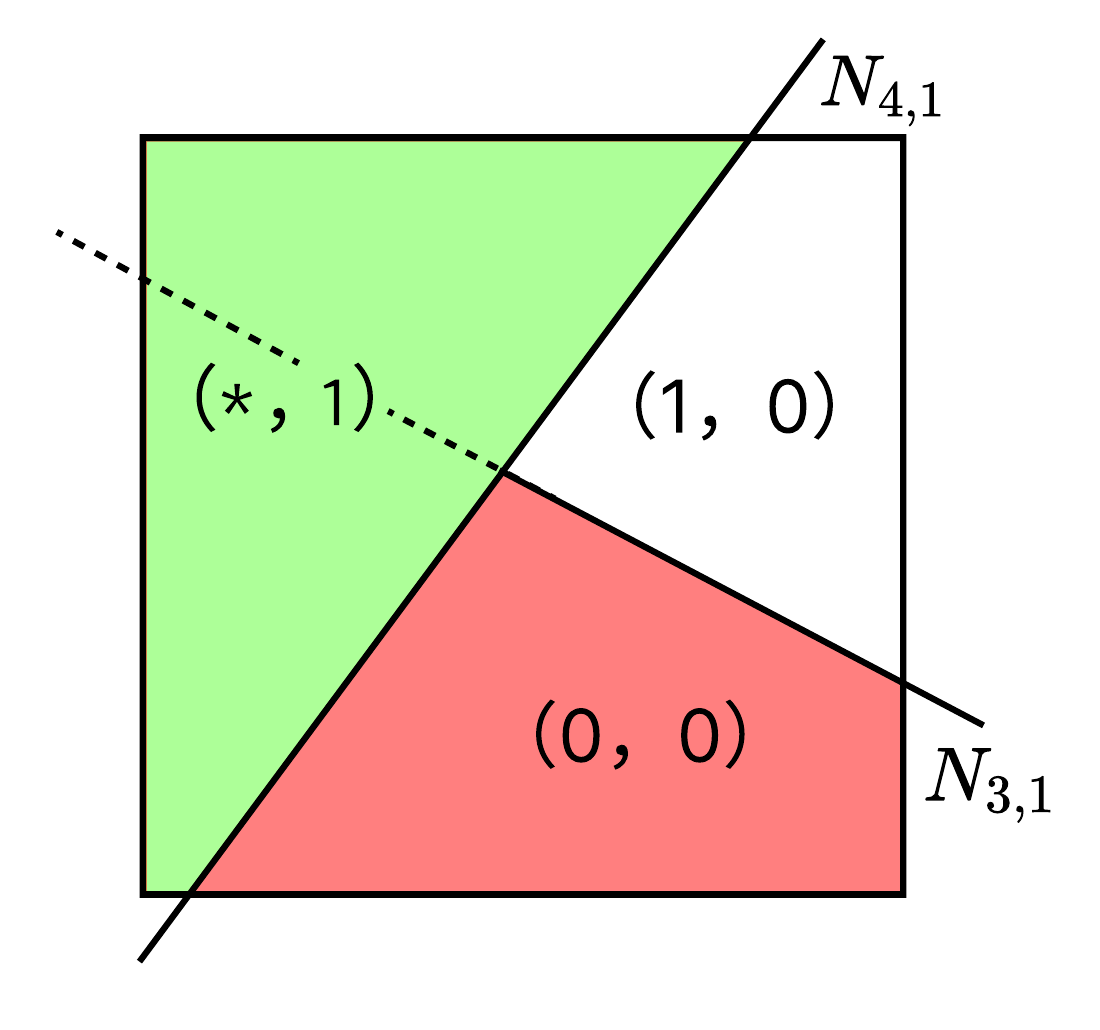}
         \caption{$N_{4,1}$ is essential.}
         \label{fig:adv_region1}
         \hfill
     \end{subfigure}
     \begin{subfigure}[t]{0.23\textwidth}
         \centering         \includegraphics[width=\textwidth]{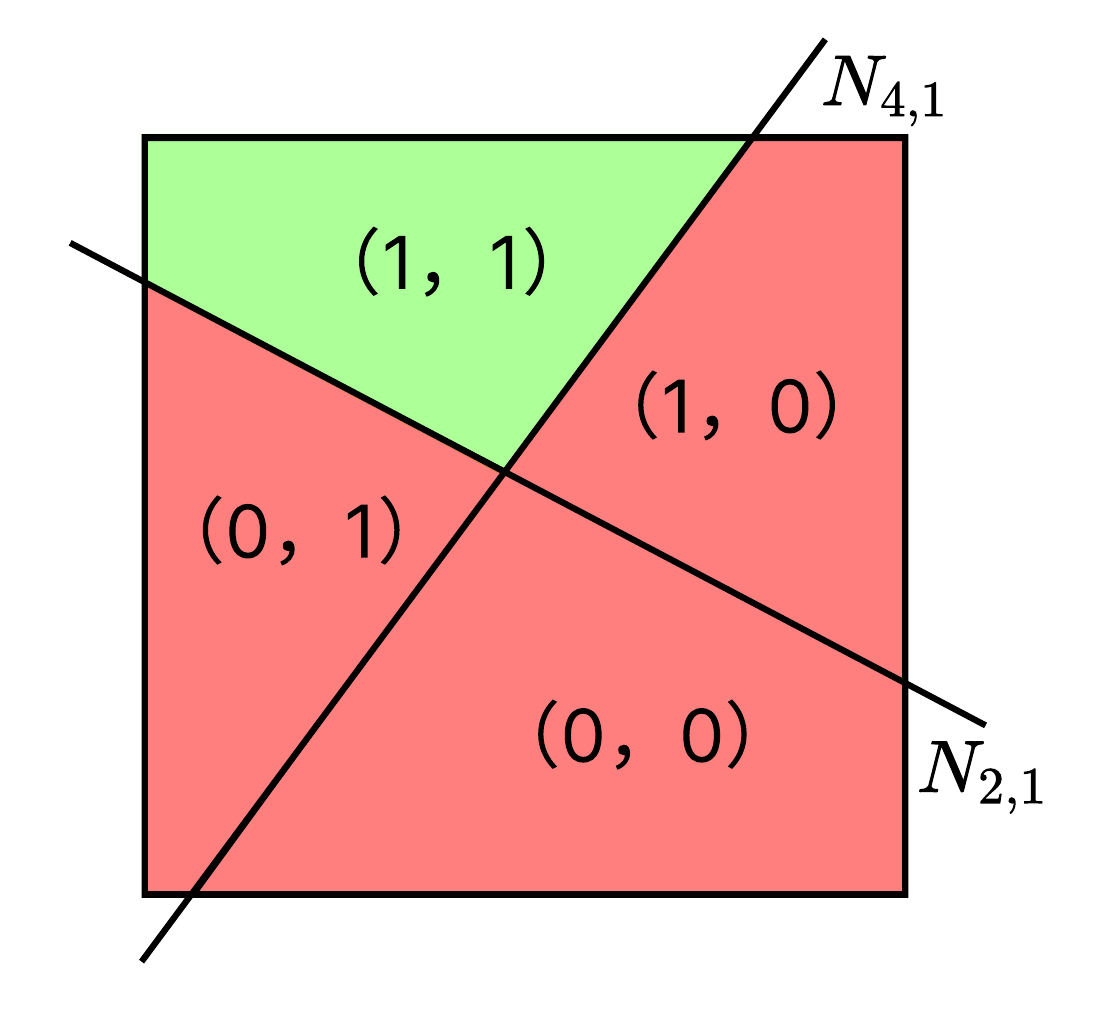}
         \caption{$N_{4,1},N_{2,1}$ are essential.}
         \label{fig:adv_region5}
         \hfill
     \end{subfigure}
     \begin{subfigure}[t]{0.23\textwidth}
         \centering         \includegraphics[width=\textwidth]{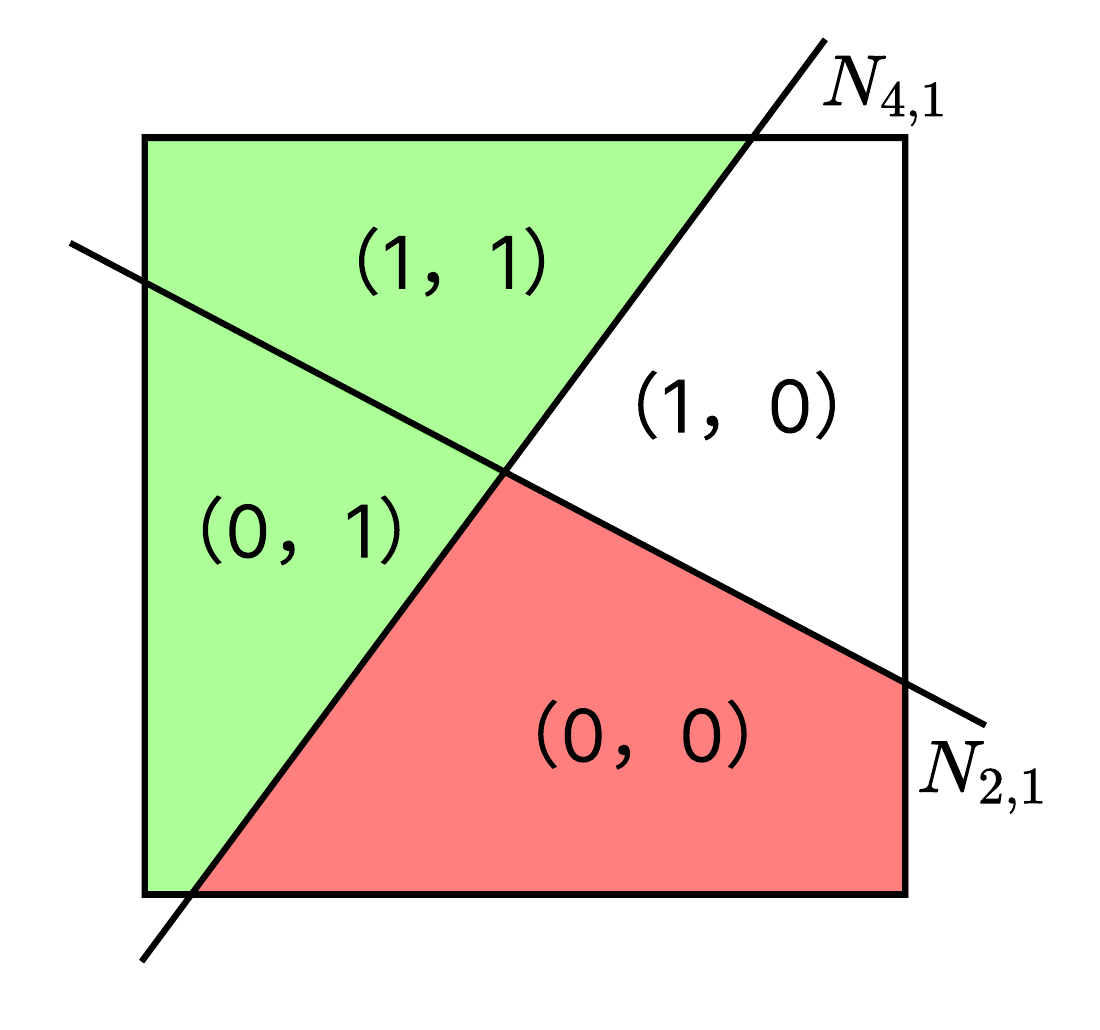}
         \caption{$N_{4,1}$ is essential.}
         \label{fig:adv_region3}
         \hfill
     \end{subfigure}
     \begin{subfigure}[t]{0.23\textwidth}
         \centering         \includegraphics[width=\textwidth]{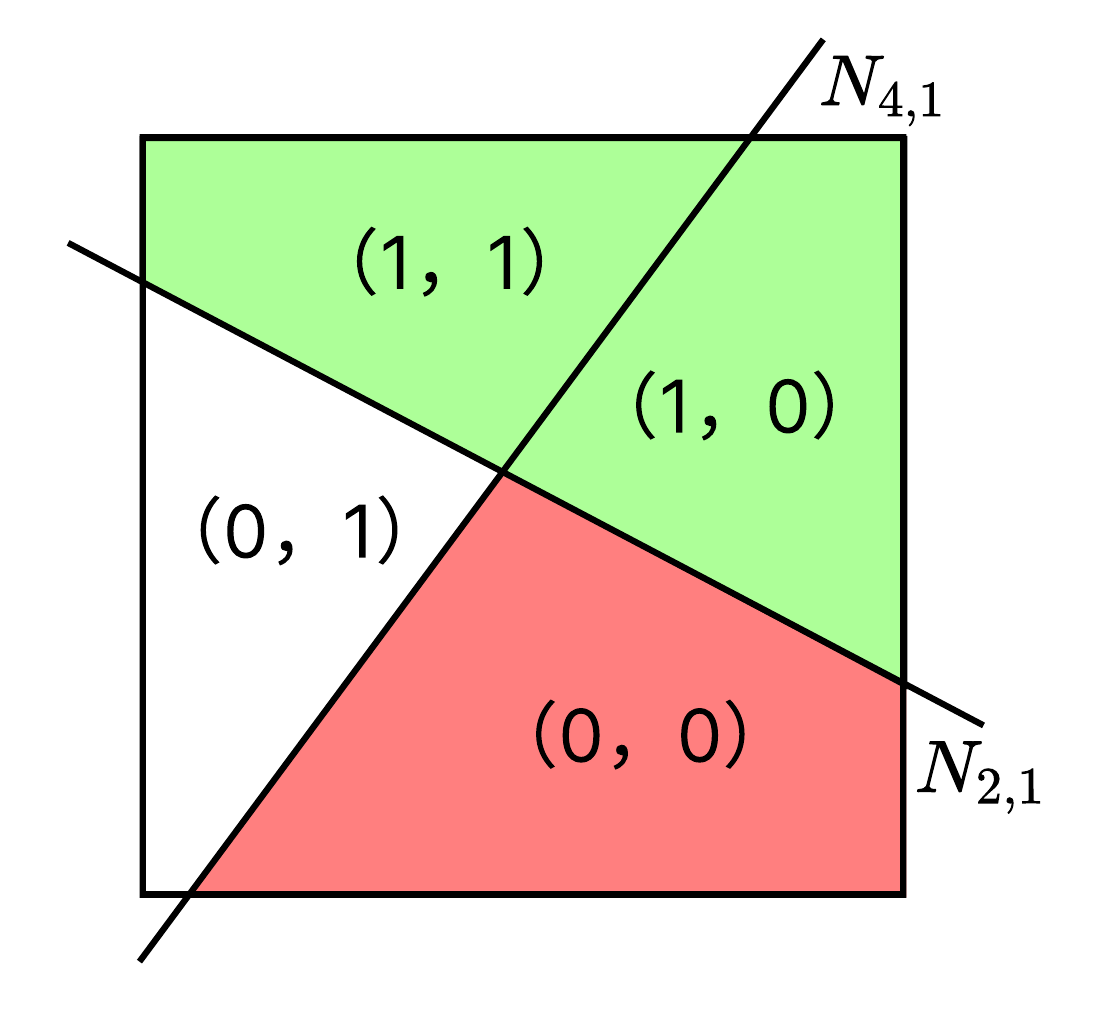}
         \caption{$N_{2,1}$ is essential.}
         \label{fig:adv_region2}
         \hfill
     \end{subfigure}
        \caption{Geometric interpretation of NAPs on essential neurons. The first subfigure represents the case when the two NAPs disagree on $N_{4,1}$. The other three subfigures represent the three cases when the two NAPs disagree on $N_{4,1},N_{2,1}$. Regions colored green pass verification, whereas red indicates an adversarial example exists.
        Here, we omit states for $N_{1,1},N_{3,1}$ in those NAPs for simplicity.}
        \label{fig: Geometric_interpretation_essentia_neurons}
\end{figure}

As for non-essential neurons in $P$, since they can be coarsened to $\mathbf{*}$, it implies $\mathbf{F}(x) \geq 0$ regardless of the value of $\hat{z}_i^{(l)}(x)$. Formally, this can be written as: If $N_{i, l}$ is in state $\mathbf{0}$ or $\mathbf{1}$, it implies that $\forall x, \mathbf{F}(x) \geq 0$. In the \textsc{Coarsen} algorithm, we rely on interaction with the verification tool $\mathcal{V}$ to identify essential neurons. Since calls to $\mathcal{V}$ are typically computationally expensive, it would be advantageous to estimate $E$ in a more cost-effective manner. This motivates us to study the following optimistic approach.

We introduce \textsc{OptAdvPrune} to identify essential neurons. Intuitively, it attempts to show a neuron $N_{i,l}$ is essential by actively falsifying NAP candidates with adversarial examples. When an adversarial example $x'$ is found, it immediately indicates that the NAP $\ddot{\mathcal{A}}(N, x' )$ fails the verification, i.e., $\mathcal{V}(\ddot{\mathcal{A}}(N, x' )) = 0$. Moreover, it also implies that any NAP subsumed by $\ddot{\mathcal{A}}(N, x' )$ fails verification. For instance, suppose an adversarial example $x'$ is found for a simple one-layer four-neuron neural network and $\ddot{\mathcal{A}}(N, x' )$ is $\langle\mathbf{1},\mathbf{0},\mathbf{1},\mathbf{0}\rangle$. We can infer that NAPs like $\langle\mathbf{1},\mathbf{0},\mathbf{1},\mathbf{*}\rangle$, $\langle\mathbf{1},\mathbf{0},\mathbf{*},\mathbf{*}\rangle$, $\langle\mathbf{1},\mathbf{*},\mathbf{*},\mathbf{0}\rangle$, and $\langle\mathbf{1},\mathbf{0},\mathbf{*},\mathbf{0}\rangle$ fail the verification. This information is particularly useful when determining if a neuron is essential. For example, if we know that NAP $P := \langle\mathbf{1},\mathbf{0},\mathbf{*},\mathbf{1}\rangle$ is a specification, i.e., $\mathcal{V}(P) = 1$, then we can easily deduce that the fourth neuron $N_{4,1}$ is essential. This is because that coarsening the fourth neuron would expand $P$ to $\langle\mathbf{1},\mathbf{0},\mathbf{*},\mathbf{*}\rangle$, which would include the adversarial example $x'$ and thus fail the verification, as illustrated in Figure \ref{fig:adv_region1}. It is evident that the neuron where $P$ and $\ddot{\mathcal{A}}(N, x' )$ disagrees must be essential. 

\begin{algorithm2e}[t]
    \caption{\textsc{OptAdvPrune}}  
    \label{alg:Pruning}
    \small
    \DontPrintSemicolon
    \SetKwProg{Fn}{Function}{}{end}
    \SetKwFunction{Coarsen}{Coarsen}
    \KwIn{The neural network $N$, the input dataset $X$ }
    \KwOut{A collection of essential neurons}
    
    \Fn{OptAdvPrune($N$, $X$)}{
        $Mand \leftarrow \emptyset$ ; $P \leftarrow \widetilde{\mathcal{A}}(N) $ \; 
            \For{$x_{j}$ \textbf{in} $X$}{ 
             $x'_{j} \leftarrow Adversarial\_Attack(x_j)$ \;
             \For{$N_{i,l}$ \textbf{in} $N$}{
              \uIf{$P_{i, l} \in \{\mathbf{0},\mathbf{1}\}$ and $\ddot{\mathcal{A}}(N_{i,l}, x'_j ) \oplus P_{i,l} $} {
                 $Mand \leftarrow Mand \cup \{N_{i, l}\}$   \tcc*{Create essential neurons for $x'_{j}$}
             }}

        }
        \Return $Mand$\;
        }      
\end{algorithm2e}

However, when the two NAPs disagree on multiple neurons, things become a little bit different. Suppose the NAP specification $P$ is $\langle\mathbf{1},\mathbf{1},\mathbf{*},\mathbf{1}\rangle$, i.e., $\mathcal{V}(\langle\mathbf{1},\mathbf{1},\mathbf{*},\mathbf{1}\rangle) = 1$. We know $\mathcal{V}(\langle\mathbf{1},\mathbf{0},\mathbf{*},\mathbf{0}\rangle) = 0$ by the adversarial example $x'$. In this case, if we coarsen the second and fourth neurons, $N_{2,1}$ and $N_{4,1}$, the NAP specification will expand to $\langle\mathbf{1},\mathbf{*},\mathbf{*},\mathbf{*}\rangle$, which will cover the $\langle\mathbf{1},\mathbf{0},\mathbf{*},\mathbf{0}\rangle$, thus failing the verification. In this case, $N_{2,1}$ and $N_{4,1}$ could both be essential neurons or either one of them is essential, as illustrated in Figures \ref{fig:adv_region5}, \ref{fig:adv_region3}, \ref{fig:adv_region2}. So, we simply let $\{N_{2,1}, N_{4,1}\}$ be the upper bound of essential neurons (learned from $x'$). Formally, given a NAP $P$, we say a neuron $N_{i,l}$ is in the upper bound of essential neurons $E$ if satisfies the following condition:

\begin{enumerate}
\item $N_{i,l}$ must be in the binary states, i.e., $P_{i,l} \in \{\mathbf{0}, \mathbf{1}\}$
\item There exists $x'$ such that $\ddot{\mathcal{A}}(N_{i,l}, x' )$ XORs with $P_{i,l}$, i.e., $\exists x' \text{ such that } \ddot{\mathcal{A}}(N_{i,l}, x' ) \oplus P_{i,l} = 1$
\end{enumerate}

The field of adversarial attacks provides a diverse set of computationally efficient methods, enabling access to numerous adversarial examples. In this study, we use a collection of different attacks, including the Projected Gradient Descent (PGD) attack \cite{pgd} and the Carlini-Wagner (CW) attack \cite{cw2}. By computing the upper bound of essential neurons for each adversarial example and taking their union, we efficiently estimate the overall upper bound, as shown in Algorithm \ref{alg:Pruning}.

\section{Volume Estimation of $R_{P}$}
\label{appendix: nap volume}


\begin{wrapfigure}{R}{0.4\textwidth}
  \centering
  \includegraphics[width=0.4\textwidth]{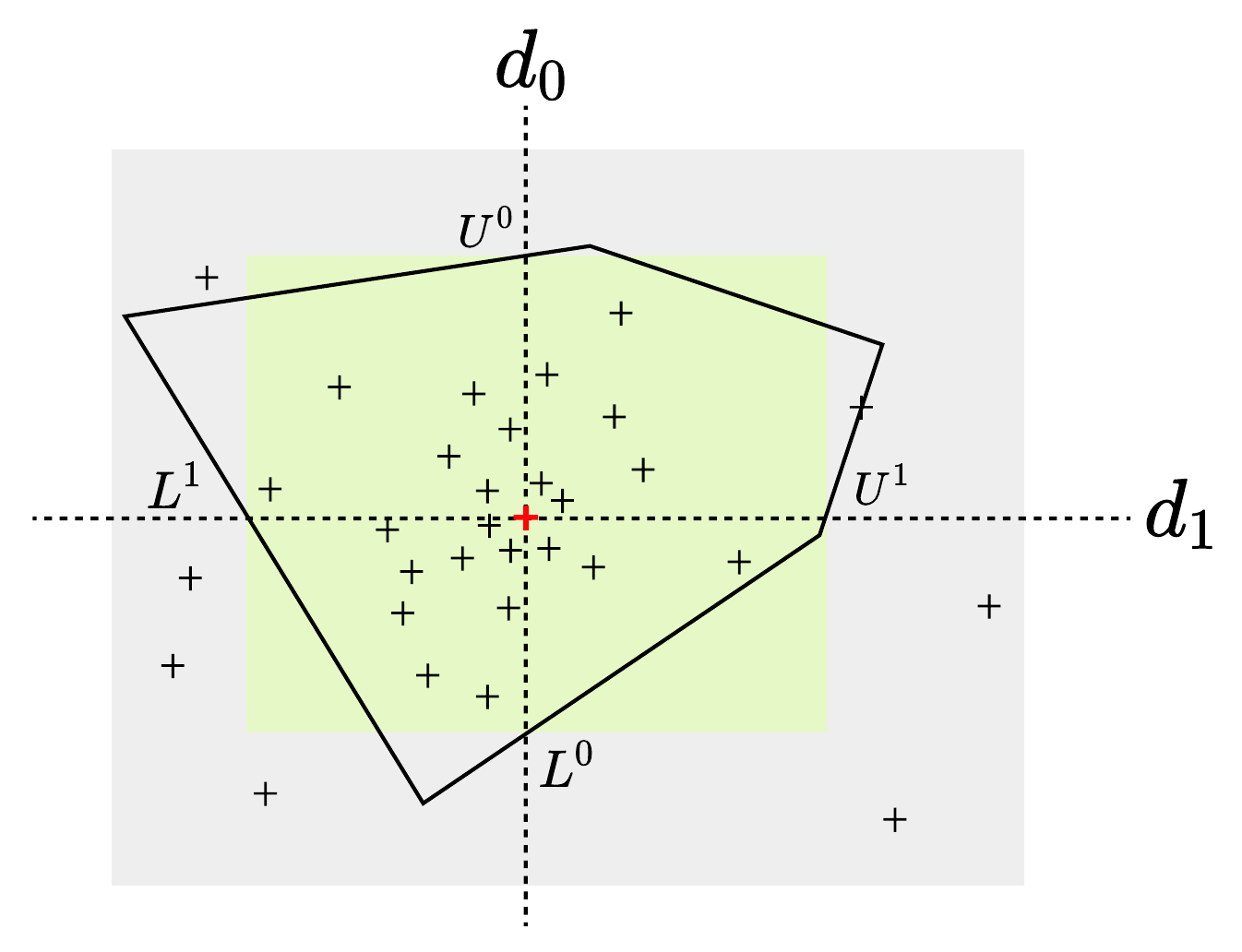}
  \caption{Volume Estimation of \( R_{P} \) using an orthotope in a 2-dimensional case. The gray rectangle represents the input space, with the training set depicted by a collection of data points \( + \). The polygon corresponds to some NAP \( P \). Initially, we identify an anchor point, denoted by \textcolor{red}{\( + \)}. Then, we construct the orthotope, represented by the green rectangle, by extending upper and lower bounds starting from the anchor point until it extends beyond \( P \).}
  \label{fig:volume}
\end{wrapfigure}

Conceptually, NAP specifications typically correspond to significantly larger input regions compared to local neighbourhood specifications. This serves as the primary motivation for utilizing NAPs as specifications. However, previous work lacks sufficient justification or evidence to support this claim. In this section, we propose a simple method for approximating the volume of $R_{P}$, i.e., the region corresponding to a NAP $P$. This allows us to: 1) quantify the size difference between $R_{P}$ and $L_\infty$ ball specifications; 2) gain insights into the volumetric change from the most refined NAP specification to the minimal NAP specification.

Computing the exact volume of $R_{P}$ is at least NP-hard, as determining the exact volume of a polygon is known to be NP-hard \cite{volume_np}. Moreover, computing the exact volume of $R_{P}$ can be even more challenging due to its potential concavity. To this end, our method estimates the volume of $R_{P}$ by efficient computation of an orthotope that closely aligns with $R_{P}$, as illustrated in Figure \ref{fig:volume}. We briefly describe it as follows:

\paragraph{Finding an anchor point} The first step is to find an anchor point to serve as the center of the orthotope. Ideally, this anchor point should be positioned close to the center of $R_{P}$ to ensure a significant overlap between the orthotope and $R_{P}$. However, computing the actual center of $R_{P}$ is costly. Thus, we look for a pseudo-center from the training set $X$ that resides in $R_{P}$. This pseudo-center can be computed by finding the point that uses the smallest $L_{\infty}$ ball to cover other data points, solved as the following optimization problem:
\[
c_{\text{pseudo}} = \underset{x \in R_{P}}{\arg\min} \max_{x' \in  R_{P}} \|x - x'\|_{\infty}
\]
where $R_{P} = \{ x \mid \mathcal{A}(N,x) \preccurlyeq P, x \in X \}$. When $|X|$ is small, $c_{\text{pseudo}}$ can be computed directly; for larger $|X|$, a statistical computation strategy is required.

\paragraph{Constructing the orthotope} Once the pseudo-center \( c_{\text{pseudo}} \) is determined, we want to create an orthotope around \( c_{\text{pseudo}} \) to closely align with \( R_{P} \). The orthotope is constructed by determining pairs of upper and lower bounds \( U^{(i)} \) and \( L^{(i)} \) for each dimension \( i \). Specifically, \( U^{(i)} \) and \( L^{(i)} \) are computed through expansion in two opposite directions from \( c_{\text{pseudo}} \) along dimension \( i \) until they extend beyond \( R_{P} \). This expansion can be expressed as:

\[
\max_{U^{(i)}} \{ x' \in R_{P} \,|\, x' := c_{\text{pseudo}} + U^{(i)} \}
\text{ ; }
\max_{L^{(i)}} \{ x' \in R_{P} \,|\, x' := c_{\text{pseudo}} - L^{(i)} \}
\]

Here, \( U^{(i)} \) and \( L^{(i)} \) represent the upper and lower bounds in dimension \( i \) respectively, originating from \( c_{\text{pseudo}} \). These bounds can be efficiently calculated with binary search.

The choice of the archer point is crucial in our approach. If it is located at a corner of \( R_p \), the volume calculation will be highly biased. This can pose a problem when we seek to understand the volumetric change from the most refined NAP specification to the minimal NAP specification. Additionally, using the orthotope as an estimator provides convenience in understanding the volumetric change simply by examining differences in each input dimension.

\section{Evaluation}
\label{sec:eval}

In this section, we conduct a comprehensive evaluation of our algorithms for learning minimal NAP specifications across a range of benchmarks, from decision-critical tasks in cancer diagnosis to state-of-the-art image classification models. To illustrate the effectiveness of our approaches, we chose the method proposed in \cite{geng23} as the baseline, denoted as the $\widetilde{\mathcal{A}}$ function. 

\paragraph{Experiment Setup}
All experiments in this section were conducted on an Ubuntu 20.04 LTS machine with 172 GB of RAM and an Intel(R) Xeon(R) Silver Processor. For verification, we utilized Marabou \cite{Marabou}, a dedicated state-of-the-art neural network verifier. We set a timeout of 5 minutes for each query call to the verification tool. If the timeout is exceeded, the current neuron is retained in the minimal NAP specification even if its status cannot be determined.
\subsection{The Wisconsin Breast Cancer Dataset with Binary Classifier} 
We conduct our first experiment using a four-layer neural network as a binary classifier, where each layer consists of 32 neurons. This classifier is trained on the Wisconsin Breast Cancer (WBC) dataset \cite{wbc}. Our trained model achieves a test set accuracy of 95.61\%. We calculate the most refined (baseline) NAP specifications $\widetilde{P}^{0}$ and $\widetilde{P}^{1}$ for labels 0 and 1 using the statistical abstraction function $\widetilde{\mathcal{A}}$ with a confidence ratio of $\delta = 0.95$. The size of $\widetilde{P}^{0}$ and $\widetilde{P}^{1}$ are  102 and 93, respectively.
In contrast, the sizes of the minimal NAP specifications learned by the \textsc{Coarsen} algorithm for labels 0 and 1 are significantly reduced to 31 and 32, respectively. 
It is worth mentioning that our optimistic approach provides a fairly accurate estimate of the essential neurons, despite computing rather loose upper bounds. To be more specific, \textsc{OptAdvPrune} compute 43  essential neurons for label 0, respectively, covering 25 out of the 31  essential neurons appearing in the minimal NAP specification for label 0 computed by \textsc{Coarsen}. For label 1, \textsc{OptAdvPrune} computes 39  essential neurons, covering 25 out of the 32  essential neurons appearing in the minimal NAP specification for label 1 computed by \textsc{Coarsen}. 

The NAP specification learnt from \textsc{OptAdvPrune} are verified, showing it provides an optimistic upper bound to the size of minimal NAP with minimal verification calls yet finding neurons . We show it's efficiency on small models. This sheds light to solving minimal NAP problem on large models when using Coarsen and  \textsc{StochCoarsen} are impossible given the current limit on verification engine. Regarding the statistical approaches, \textsc{StochCoarsen} learnt NAP specifications of size 42 and 45 for label 0 and label 1 using only 47 and 41 calls to verification respectively. 

Recall that one of the main motivations for learning the minimal NAP specifications is their potential to verify larger input regions compared to refined NAP specifications. To support this, we compute the percentile of unseen test data that can be verified using these NAPs. Test data, sampled from the input space, serve as a proxy to understand the verifiable bounds of different NAPs. We find that the most refined NAP specifications $\widetilde{P}^{0}$ and $\widetilde{P}^{1}$ cover 81.40\% and 80.28\% of test data for labels 0 and 1, respectively. In contrast, minimal NAP specifications cover 95.35\% and 94.37\% of test data for labels 0 and 1, respectively. To intuitively understand the change in verifiable regions \( R_P \) from refined to minimal NAP specifications, we compare their estimated volumes. The increase in estimated volume is substantial—on the order of \( 10^5 \) times larger for labels 0 and 1.


\begin{table}[t]
\centering
\caption{Overview of the size of learned minimal NAP specifications using various approaches on the WBC benchmark. The columns represent different labels, while each row corresponds to a different algorithm. \text{|$P$|} denotes the size of the learned NAP, and \text{\#$\mathcal{V}$} represents the number of calls to $\mathcal{V}$. The \textit{train} and \textit{test} columns report the percentile (\%) of train and test data covered by $P$. We assess the effectiveness of NAP specifications in covering both training and test data, reporting the coverage percentage (\%) along with the estimated change in volume, expressed as an order of magnitude (x) relative to a baseline normalized to 1.}
\label{tab:mnist_res}
\begin{tabular}{|l|ccccc|ccccc|} 
\hline 
& \multicolumn{5}{c|}{0} & \multicolumn{5}{c|}{1} \\
\cline{2-11}
& \text{|$P$|} & \text{\#$\mathcal{V}$} & \textit{train} & \textit{test} & \textit{$vol.$} & \text{|$P$|} & \text{\#$\mathcal{V}$} & \textit{train} & \textit{test} & \textit{$vol.$} \\
\hline \hline
{\fontsize{7}{12}\selectfont$\widetilde{\mathcal{A}}$} \normalsize{function} (\textit{baseline.}) & 102 & 1 & 78.11 & 81.40 & 1 & 93 & 1 & 83.06 & 80.28 & 1 \\
$\textsc{Coarsen}$ & 31 & 102 & 98.22 & 95.35 & $10^5$ & 32 & 93 & 99.65 & 94.37 & $10^5$ \\
$\textsc{StochCoarsen}$ & 42 & 47 & 94.69 & 92.34 & $10^3$ & 45 & 41 & 94.15 & 91.52 & $10^2$ \\
$\textsc{OptAdvPrune}$ & 61 & 1 & 91.06 & 89.37 & $10^2$ & 54 & 3 & 87.06 & 87.28 & $10^2$ \\
\hline
\end{tabular}
\end{table}



\subsection{MNIST with Fully Connected Network}
To show that our insights and approaches can be applied to more complicated datasets and networks, we conduct the second set of experiments using the \texttt{mnistfc\_256x4} model \cite{vnncomp2021},  a 4-layer fully connected network with 256 neurons per layer trained on the MNIST dataset. Due to space constraints, we present results for randomly selected classes 0, 1, and 4, with the remaining results provided in Appendix \ref{appendix:mnist}.


\begin{table}[b]
    \centering
    \caption{Overview of learned minimal NAP specifications on the MNIST benchmark.}
    \label{tab:mnist_res}
    \resizebox{\textwidth}{!}{%
        \begin{tabular}{ |l | c c c c c | c c c c c | c c c c c| }
            \hline
            & \multicolumn{5}{c|}{0} &  \multicolumn{5}{c|}{1}  & \multicolumn{5}{c|}{4} \\
            \cline{2-16}
            & $|P|$ &  $\#\mathcal{V}$  & \textit{train\%} & \textit{test\%} & \textit{vol.} & $|P|$ & $\#\mathcal{V}$ & \textit{train\%} & \textit{test\%} & \textit{vol.} & $|P|$ & $\#\mathcal{V}$ & \textit{train\%} & \textit{test\%} & \textit{vol.} \\
            \hline\hline
            {\fontsize{7}{12}\selectfont$\widetilde{\mathcal{A}}$} \normalsize{function} (\textit{baseline.})  & 751 & 1 & 79.50 & 71.51 & 1 & 745 & 1 & 75.01 & 70.11 & 1 & 712 & 1 & 77.54 & 75.24 & 1 \\ 
            $\textsc{Coarsen}$ & 480 & 751 & 98.68 & 98.78 & $10^8$ & 491 & 745 & 98.90 & 98.59 & $10^6$ & 506 & 712 & 98.51 & 97.45 & $10^9$ \\ 
            $\textsc{StochCoarsen}$ & 532 & 33 & 93.19 & 93.01 & $10^5$ & 559 & 27 & 94.12 & 93.68 & $10^3$ & 562 & 25 & 93.89 & 93.64 & $10^6$ \\ 
            $\textsc{OptAdvPrune}$ & 618 & 15 & 83.13 & 81.32 & $10^{2}$ & 630 & 18 & 86.02 & 85.51 & $10^{2}$ & 699 & 21 & 82.66 & 84.12 & $10^2$ \\ 
            \hline
        \end{tabular}%
    }
\end{table}

We compute the most refined NAP specifications, $\widetilde{P}^{0}$, $\widetilde{P}^{1}$, and $\widetilde{P}^{4}$, for these labels with a confidence level of $\delta = 0.99$. Their sizes are 751, 745, and 712, respectively, consistent with the baseline results from previous work \cite{geng23}. In contrast, the minimal NAP specifications learned by the \textsc{Coarsen} algorithm for labels 0, 1, and 4 are significantly reduced to 480, 491, and 506, respectively, despite requiring over 700 calls. These specifications cover more than 98\% of the training and test data, demonstrating a significant improvement over the baseline. For estimated volume, the volumetric changes can be on the order of magnitude of \(10^9\) larger. Notably, \textsc{StochCoarsen} achieves comparable results with only around 30 calls, at the cost of approximately a 5\% reduction in coverage on both training and test data compared to \textsc{Coarsen}. This highlights the potential of these learned minimal NAPs as robust specifications that generalize well to unseen data from the same distribution. In comparison, our empirical analysis reveals that local neighborhood specifications cover \emph{zero} test images in the entire MNIST dataset with \(\epsilon = 0.2\), which is the maximum \(L_\infty\) verifiable bound used in VNNCOMP \cite{vnncomp2023}, the annual neural network verification competition. This further validates our motivation for using NAPs as specifications.



For instance, for label 0, \textsc{OptAdvPrune} detects 618 essential neurons and correctly identify 445 and 160 of the 480 neurons in the minimal NAP founded by \textsc{Coarsen}. When interacting with \(\mathcal{V}\), \textsc{OptAdvPrune} computes minimal NAPs with an average of fewer than 20 calls. However, it experiences a decline in data coverage, with an average drop of over 5\% compared to the baseline.



In summary, there is no universal solution to a given minimal specification problem. Our experiment highlights the distinct strengths and trade-offs of the three algorithms. While the results from all have gurantee on correctness, \textsc{Coarsen} finds the minimal NAP among the three, making it the most reliable for applications where rigorous minimality is priority. \textsc{StochCoarsen} offers a significant speed advantage on top of \textsc{Coarsen} while maintaining minimality:  the learned minimal NAP specification uses approximately 1.1x more neurons but requires only 5\% of the verification calls. This significant reduction in computational overhead makes \textsc{StochCoarsen} particularly well-suited for scenarios where efficiency is a priority. Finally, \textsc{OptAdvPrune}, employing an optimistic approach, is the fastest. The initial set of neurons, when they can be successfully verified, provides a tight upper bound for the global minimum. When the initial set cannot be verified, it provides a effective starting point for \textsc{StochCoarsen} and \textsc{Coarsen}. Moreover, \textsc{OptAdvPrune} can be utilized in a verification-dependent manner, they excel in scenarios that scale beyond our verification capabilities. In these situations, they provide accurate estimations of essential neurons, enabling us to examine potential causal links between neurons and the interpretability and robustness of deep neural networks, as we will illustrate in the next experiment. Together, these algorithms apply to diverse scenarios, allowing users to balance speed and minimality accuracy, and guarantee of validity based on their specific requirements.

\subsection{ImageNet with Deep Convolutional Neural Network}

In our third experiment, we present our finds on the fully connected layers \footnote{NAP computation for convolutional layers is left for future work.} of VGG-19 network pretrained on the ImageNet dataset, consisting of 8192 neurons. Provided it is currently impossible to verify NAP specifications on models of this size, we focus on analyzing the effect of essential neurons estimated using \textsc{OptAdvPrune}. We also limit our analysis to the top five largest classes, each containing approximately 1000 training and 350 test images. Our findings indicate that these estimated NAPs cover a significant portion of unseen test data, highlighting their potential as robust certificates for the test set.  More details can be found in Appendix \ref{appendix:ImageNet}.

\paragraph{NAP Captures Visual Interpretability and Inherent Robustness}
From the perspective of representation learning, neural networks acquire both low- and high-level feature extractors, which they use to make final classification decisions based on hidden features (neuron representations) \cite{rep_learning}. Therefore, the robustness and consistency of a model's predictions are influenced by the quality of these learned features. In essence, achieving an accurate and robust model hinges on learning "good" hidden representations, which are characterized by better interpretability \cite{inter_survey}. Many studies suggest a close relationship between visual interpretability and robustness, often observed in the learned features and representations \cite{alvarez2018towards,boopathy2020proper,dong2017towards}. Thus, although we cannot yet formally verify the correctness of these estimated NAP specifications, we demonstrate that these NAPs are indeed "meaningful" through visual interpretability—strong evidence that the estimated  essential neurons (NAPs) contribute to the model's robustness.

\begin{figure}[h]
    \centering
    \begin{minipage}[b]{\textwidth}
        \includegraphics[width=\linewidth]{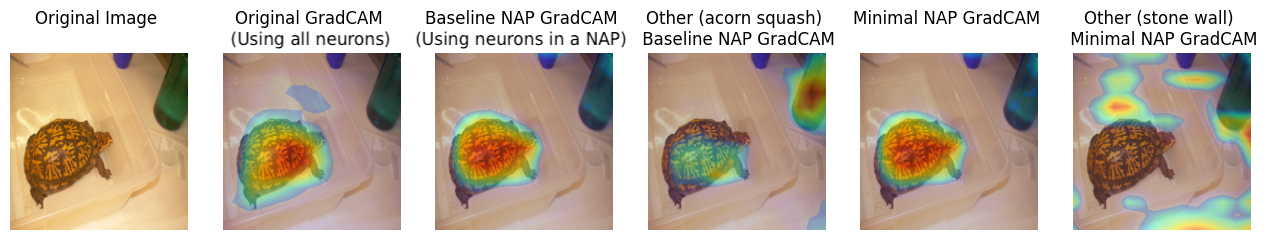}
    \end{minipage}

        \begin{minipage}[b]{\textwidth}
        \includegraphics[width=\linewidth]{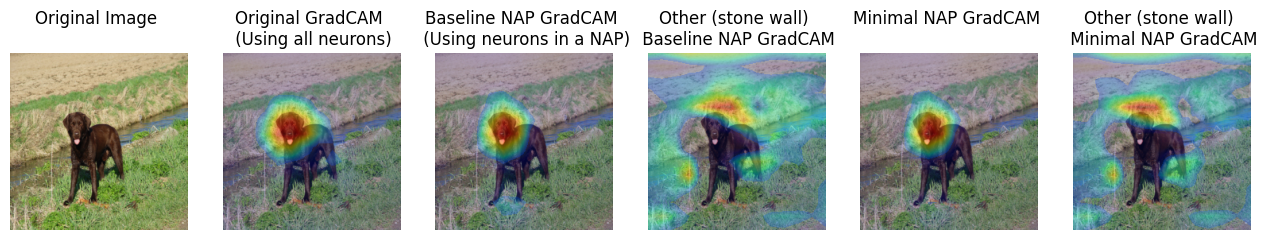}
    \end{minipage}
    
    


    
    \begin{minipage}[b]{\textwidth}
  \includegraphics[width=\linewidth]{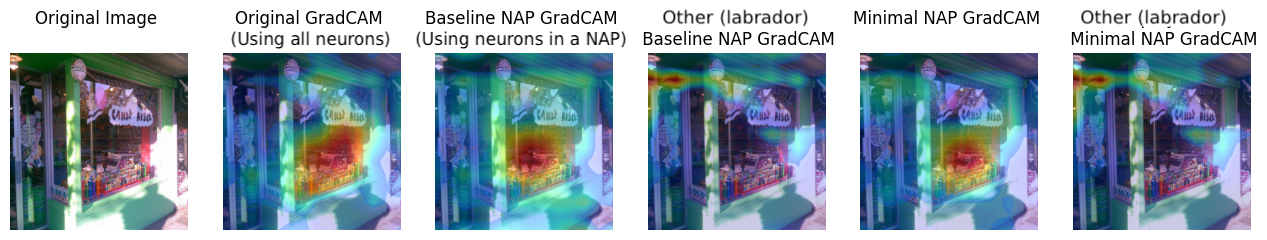}
    \end{minipage}

        \begin{minipage}[b]{\textwidth}
  \includegraphics[width=\linewidth]{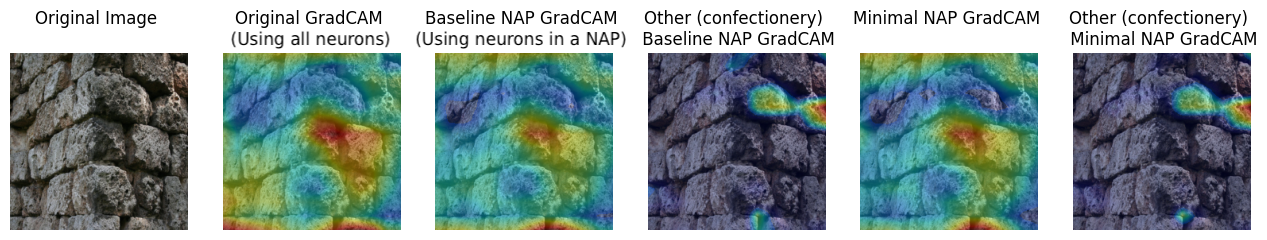}
    \end{minipage}


    \begin{minipage}[b]{\textwidth}
  \includegraphics[width=\linewidth]{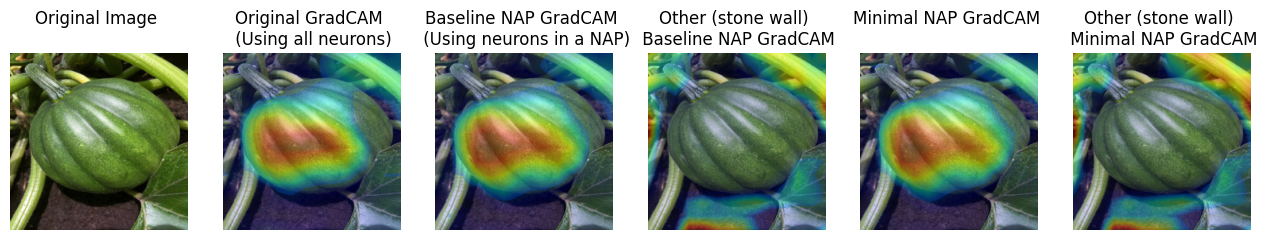}
    \end{minipage}
    
    \caption{Visual interpretability of learned hidden representations retained by estimated NAPs. The first two columns represent the original images and GradCams, respectively. The third and fourth columns represent modified GradCams using baseline NAPs of the same class and another class, respectively. The last two columns represent modified GradCams using minimal NAPs of the same class and another class, respectively.}
    \label{fig:gradcam}
\end{figure}




To this end, we make a simple modification to Grad-CAM \cite{grad_cam}, which highlights key regions of an input image by using the gradients of the classification score with respect to the final convolutional feature map. Specifically, we mask out the neurons that do not appear in NAP in the fully connected layers. We then recalculate the backward gradient flow using this modified computation graph to generate the updated Grad-CAM map. Finally, we conduct the following experiments on image samples:
\begin{enumerate}
    \item Calculate the modified Grad-CAM map using the most refined (baseline) NAP and compare it with the original Grad-CAM map.
    \item Calculate the modified Grad-CAM map using minimal NAPs estimated by \(\textsc{OptAdvPrune}\), and compare it with the original map.
    \item Calculate the modified Grad-CAM map using NAPs from different classes and compare it with the original map.
\end{enumerate}

All GradCam maps are further validated with sanity checks \citep{adebayo2018sanity}. Figure \ref{fig:gradcam} presents the experimental results.  The original GradCams highlight important regions of images corresponding to crucial justifications for classification. Notably, both the most refined and estimated minimal NAPs produce nearly identical highlights, despite representing only a small fraction of neurons.  This shows that the neuron abstractions from our minimal specification preserve the key visual features. Further, it suggests that the estimated minimal NAPs capture crucial aspects of VGG-19's internal decision-making process, supporting the "NAP robustness property." Furthermore, Grad-CAMs generated from NAPs of different classes highlight distinct regions, strongly indicating that our estimated NAPs are clearly distinguishable, aligning with the "non-ambiguity property."

These results demonstrate that NAPs offer valuable insights into interpretability. A small subset of neurons from NAPs can categorize critical internal dynamics of neural networks, potentially helping us unveil the black-box nature of these systems. From a machine-checkable definition perspective, concise NAPs are easier to decode into human-understandable programs than more refined NAPs. This underscores the importance of learning minimal NAPs. Interpreting NAPs into human-readable formats remains a direction for future research.


\paragraph{NAP as a Defense Against Adversarial Attacks}
From a practical point of view, we believe that even before 1) formal verification finally scales; and/or 2) NAPs are fully interpretable, NAPs, as they are now, can already serve as an empirical certificate of a prediction or some defense mechanism, as shown in recent work \cite{runtime_moniter}. In the same spirit, we demonstrate that our estimated essential neurons can serve as defense against adversarial attacks. 



We first select images that meet two criteria: 1) correctly predicted by the model; and 2) covered by the respective NAP. On average, each NAP covers approximately 40\% of the training data of the corresponding class. For each selected image, we generate 100 distinct adversarial examples that are misclassified by the model, using Projected Gradient Descent attack \cite{pgd} and Carlini Wagner attack \cite{cw2} respectively. We then check whether each adversarial image's activation pattern is rejected by the respective NAPs. If so, we conclude that NAPs can empirically serve as specifications; Otherwise, they are proven to be not robust. Notably, we find that both the baseline NAPs and the estimated minimal NAPs reject all adversarial examples, indicating their effectiveness in describing a safe region and their potential as certificates.

\section{Related Work}
\label{sec:related}

\subsection{Neural Network Verification} 

Neural network verification has attracted much attention due to the increasingly widespread applications of neural networks in safety-critical systems. it's NP-hard nature resulting from the non-convexity introduced by activation functions \cite{reluplex} makes it a challenging task. Thus most of the existing work on neural network verification focuses on designing scalable verification algorithms. For instance, while initially proposed solver-based approaches \cite{smt,smt1,milp,milp1} were limited to verify small neural networks with fewer than 100 neurons, state-of-the-art methods \cite{ac,bc,bab} can verify more complex neural networks. It is worth mentioning that most existing work adopts local neighborhood specifications to verify the robustness properties of neural networks \cite{few_pixel}. Despite being a reliable measure, using local neighborhood specifications around reference data points may not cover any test data, let alone generalizing to the verification of unseen test set data. \citet{geng23} propose the new paradigm of NAP specifications to address this challenge. Our work advances the understanding of NAP specifications.



\subsection{Abstract Interpretation} 

Abstract interpretation \citep{Absint} is a fundamental concept in software analysis and verification, particularly for approximating the semantics of discrete program states. By sacrificing precision, abstract interpretation typically enables scalable and faster proof finding during verification \cite{CousotC14}. Although abstract interpretation for neural network verification has been proposed and studied in previous literature \cite{AI2,eth}, abstract interpretation of neural activation patterns for verification is a relatively new field. Perhaps the most related work from the perspective of abstract interpretation is learning minimal abstractions \cite{minimal_abs}. While our work shares similarities in problem formulation and statistical approaches, we address fundamentally different problems. One limitation in our work is that our abstraction states may be too coarse: value in range $(0,+\infty)$ is abstracted into one state. This approach could over-approximate neuron behavior and thus fail to prove certain properties. We observe that neuron values exhibit different patterns in range for different input classes, suggesting the potential existence of more abstraction states. We leave this as future work.

\subsection{Neural Activation Patterns} 
Neural activation patterns have commonly been used to understand the internal decision-making process of neural networks. One popular line of research is feature visualization \cite{vis_network, NAPs1}, which investigates which neurons are activated or deactivated given different inputs. This is also naturally related to the field of activation maximization \cite{SimonyanVZ13}, which studies what kind of inputs could mostly activate certain neurons in the neural network. In this way, certain prediction outcomes may be attributed to the behavior of specific neurons, thereby increasing the interpretability of the underlying models. \citet{runtime_moniter} demonstrates that neural activation patterns can be used to monitor neural networks and detect novel or unknown input classes at runtime. They provide human-level interpretability of neural network decision-making.
In summary, most of existing works focus on learning statistical correlations between NAPs and inputs \cite{netdis, Erhan2009VisualizingHF}, or between NAPs and prediction outcomes \cite{runtime_moniter}. However, these correlations raises questions which we address in this paper: whether the correlation can be trusted or even verified. We propose the concept of essential neurons and highlight their importance in the robustness of model predictions. Such causal links between neurons and prediction outcomes are not only identified but also verified. We believe this "identify then verify" paradigm can be extended to existing research on NAPs to certify our understanding of neural networks. We leave the exploration of this direction for our future work.

\section{Conclusion}
\label{sec:conclusion}

We introduce a new challenge: learning the minimal NAP specification and highlighting its significance in neural network verification. Identifying minimal NAP specifications not only facilitates the verification of larger input regions compared to existing methods but also provides insights into when and how neural networks make reliable and robust predictions. To address this problem, we propose three approaches—conservative, statistical, and optimistic—each offering distinct trade-offs between efficiency and performance. The first two rely on the verification tool to find minimal NAP specifications. The optimistic method efficiently estimates minimal NAPs using adversarial examples, without making calls to the verification tool until the very end. Each of these methods offers distinct strengths and trade-offs in terms of minimality and computational speed, making each approach suitable for scenarios with different priorities. The learnt minimal NAP specification allows us to inspect potential causal links between neurons and the robustness of state-of-the-art neural networks, a task for which existing work fails to scale. Our experimental results suggest that minimal NAP specifications require much smaller fractions of neurons compared to the NAP specifications computed by previous work, yet they can significantly expand the verifiable boundaries to several orders of magnitude larger.

\pagebreak

\bibliographystyle{ACM-Reference-Format}
\bibliography{main}


\newpage
\appendix
\onecolumn


\appendix
\label{Appendix}

\section{Learned Minimal NAP Specifications on the MNIST Benchmark (Complement)}
\label{appendix:mnist}

\begin{figure}[!h]
         \centering         
         \includegraphics[width=0.8\textwidth]{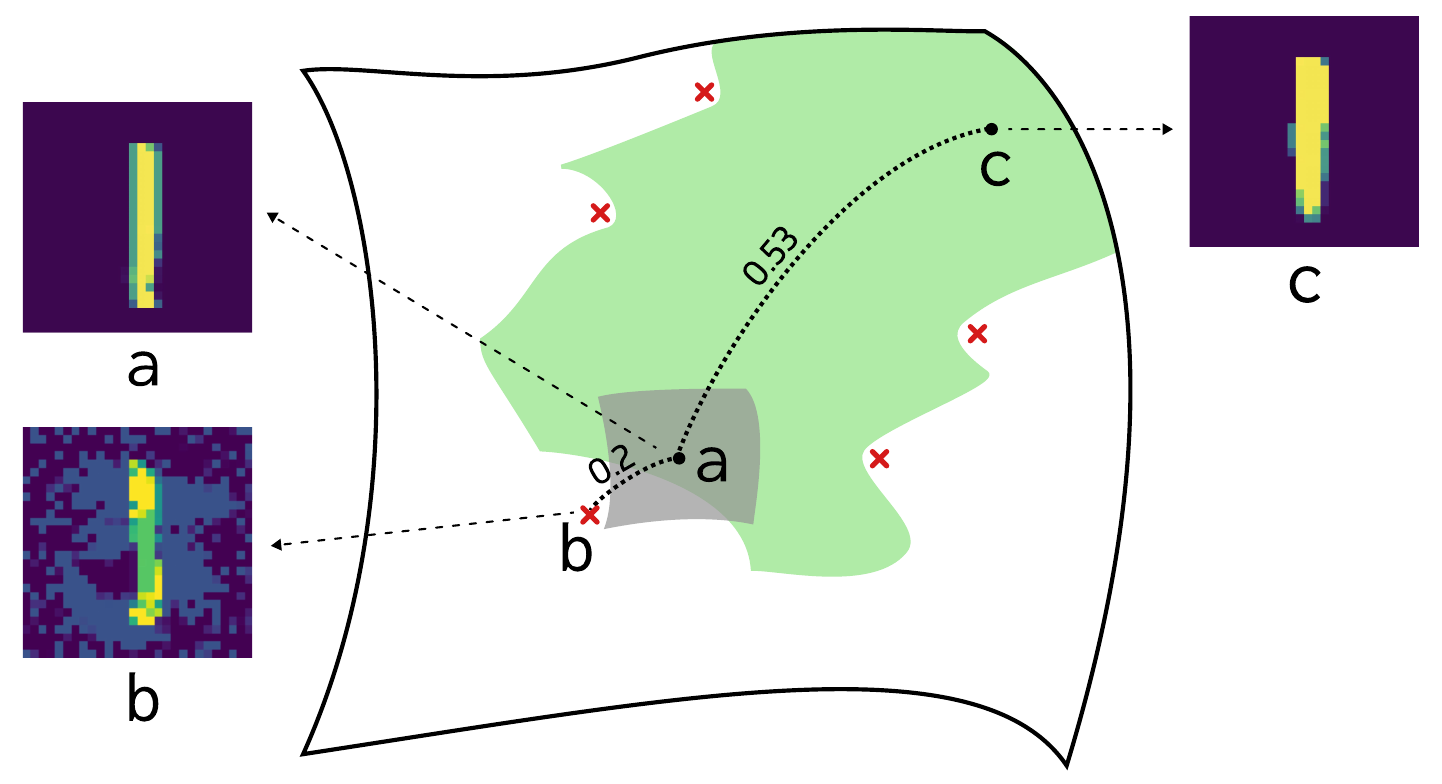}
         \caption{The comparison between NAP specifications (green region) and \(L_{\infty}\) ball specifications (gray region) on the MNIST dataset. Image \(a\) is the reference image, and image \(c\) is the closest among all 6000 training images of digit 1, with an \(L_{\infty}\) distance of 0.5294. However, \(c\) cannot be verified using the \(L_{\infty}\) ball specification, as an adversarial example \(b\) exists at an \(L_{\infty}\) distance of 0.2. Note that this is not a limitation of the underlying verification engines but rather an intrinsic limitation of the specifications. In contrast, NAP specifications allow verification of unseen test set data like \(c\).}
         \label{fig:nap_intro}
         \vspace{-10pt}
\end{figure}

\begin{table}[!h]
    \centering
\caption{Overview of the size of learned minimal NAP specifications on the MNIST benchmark.}
    \label{tab:mnist_res}
    \resizebox{\textwidth}{!}{\begin{tabular}{ |l | c c c c | c c  c c | c c c c| c c c c| }
        \hline
        & \multicolumn{4}{c|}{2} &  \multicolumn{4}{c|}{3}  & \multicolumn{4}{c|}{5} & \multicolumn{4}{c|}{6} \\
        \cline{2-17}
        & \text{$|P|$} &  \text{\#$\mathcal{V}$}  & \text{train} & \text{test}& \text{$|P|$}& \text{\#$\mathcal{V}$}& \text{train}& \text{test} & \text{$|P|$}& \text{\#$\mathcal{V}$}& \text{train}& \text{test} & \text{$|P|$}& \text{\#$\mathcal{V}$}& \text{train}& \text{test} \\
        \hline\hline
        Baseline \citep{geng23}  & 751 & 1 & 79.50 & 80.51 & 745 & 1 & 86.01 & 85.11 & 712 & 1 & 77.54 & 80.24  1 & 712 & 1 & 77.54 & 80.24\\ 
        $\textsc{Coarsen}$ & 480 & 751 & 98.68& 98.78 & 491 & 745 & 98.90 & 98.59 & 506 & 712 & 98.51 & 97.45 & 503 & 708 & 98.81 & 98.39 \\
        $\textsc{StochCoarsen}$ & 532 & 33 & 93.19 & 93.01 & 559 & 27 & 94.12 & 93.68 & 562 & 25 & 93.89 & 93.64 & 542 & 29 & 98.81 & 98.39\\ 
        $\textsc{OptAdvPrune}$ & 618 & 15 & 83.13 & 71.32 & 630 & 18 & 79.02 & 76.51 & 699 & 21 & 82.66 & 84.12 & 681 & 20 & 85.98 & 87.63 \\ 
        \hline
    \end{tabular}}
\end{table}

\begin{table}[t]
    \centering
\caption{Overview of the size of learned minimal NAP specifications on the MNIST benchmark.}
    \label{tab:mnist_res}
    \resizebox{\textwidth}{!}{\begin{tabular}{ |l | c c c c | c c  c c | c c c c| }
        \hline
        & \multicolumn{4}{c|}{7} &  \multicolumn{4}{c|}{8}  & \multicolumn{4}{c|}{9} \\
        \cline{2-13}
        & \text{|$P$|} &  \text{\#$\mathcal{V}$}  & \text{train} & \text{test}& \text{|$P$|}& \text{\#$\mathcal{V}$}& \text{train}& \text{test} & \text{|$P$|}& \text{\#$\mathcal{V}$}& \text{train}& \text{test} \\
        \hline\hline
        The {\fontsize{7}{12}\selectfont$\widetilde{\mathcal{A}}$} \normalsize{function}  & 751 & 1 & 79.50 & 80.51 & 745 & 1 & 86.01 & 85.11 & 712 & 1 & 77.54 & 80.24\\ 
        $\textsc{Coarsen}$ & 480 & 751 & 98.68& 98.78 & 491 & 745 & 98.90 & 98.59 & 506 & 712 & 98.51 & 97.45 \\ 
        $\textsc{Adversarial\_Prune}$ & 618 & 15 & 83.13 & 71.32 & 630 & 18 & 79.02 & 76.51 & 699 & 21 & 82.66 & 84.12 \\ 
        $\textsc{Sample\_Coarsen}$ & 532 & 33 & 93.19 & 93.01 & 559 & 27 & 94.12 & 93.68 & 562 & 25 & 93.89 & 93.64 \\ 
        \hline
    \end{tabular}}
\end{table}

\section{Learned Minimal NAP Specifications on ImageNet with Deep Convolutional Neural Network (Complement)}
\label{appendix:ImageNet}

\begin{table}
\centering
\caption{Overview of the size of learned minimal NAP specifications on the ImageNet benchmark. }

\label{tab:vgg_NAP}
\resizebox{\textwidth}{!}{
\begin{tabular}{|l|cc|cc|cc|cc|cc|} 
\hline
& \multicolumn{2}{c|}{\text{box\_turtle}} & \multicolumn{2}{c|}{\text{labrador\_retriever}} & \multicolumn{2}{c|}{\text{acorn\_squash}} & \multicolumn{2}{c|}{\text{confectionery}} & \multicolumn{2}{c|}{stone\_wall}  \\ 
\cline{2-11}
&\text{|$P$|} &\textit{test} &\text{|$P$|}  &\textit{test} &\text{|$P$|} &\textit{test}&\text{|$P$|}  &\textit{test} &\text{|$P$|} &\textit{test} \\
\hline\hline
The {\fontsize{7}{12}\selectfont$\widetilde{\mathcal{A}}$} \normalsize{function} & 1978 & 39.42           & 691 & 39.61           & 1003 & 29.23                  & 878 & 33.84                  & 971 & 31.54                   \\
$\textsc{Adversarial\_Prune}$                                                                                                & 1863 & 39.42           & 661  & 41.28           & 823 & 25.68                  & 845 & 23.07                  & 865 & 39.61                   \\
$\textsc{Gradient\_Search}$                                                                                                  & 611  & 53.30           & 572  & 87.01           & 301 & 52.40                  & 256 & 44.23                  & 260 & 54.90                   \\
\hline
\end{tabular}}
\end{table}

\section{The Refine Approaches}
\subsection{The Refine Approach}
Conceptually, the \textsc{Refine} approach iteratively increases the number of refined neurons in NAP $P$ until $\mathcal{V}(P) = 1$, i.e., $P$ is able to prove the underlying robustness query. In other words, we gradually increase the size parameter $k$ and iterate over each NAP $P$ of size $k$ to check if $\mathcal{V}(P) = 1$, as illustrated in Algorithm \ref{alg:Refine}. To determine if a solution to the problem exists, we first check if the most refined NAP can succeed in verification. We proceed to iterative refinement only if $\mathcal{V}(\widetilde{P}) = 1$. However, the algorithm is not efficient and requires $2^{|N|}-1$ calls to $\mathcal{V}$ in the worst case,  as proven in Theorem \ref{thm:theorem_simplerefine}.  Please refer to the proof in Appendix \ref{appendix:proof_simple}. Therefore \textsc{Refine} is only practical when the search space of the NAP family $\mathcal{P}$ is small.

\begin{theorem}
\label{thm:theorem_simplerefine}
The algorithm \textsc{Refine} returns a minimal NAP specification with $\mathcal{O}(2^{|N|})$ calls to $\mathcal{V}$.
\end{theorem}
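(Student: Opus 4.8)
The plan is to establish the two halves of the statement separately: correctness (\textsc{Refine} returns a $\preccurlyeq$-minimal NAP specification whenever one exists, and correctly reports infeasibility otherwise) and the worst-case bound of $\mathcal{O}(2^{|N|})$ queries to $\mathcal{V}$. The structural fact driving both parts is that \textsc{Refine} enumerates candidates in nondecreasing order of size $|P|$, together with the observation that strict coarsening is size-decreasing: if $P' \preccurlyeq P$ and $P' \neq P$, then some neuron that is binary in $P$ is mapped to $\mathbf{*}$ in $P'$ while none moves the other way, so $|P'| < |P|$. I would state and prove this size-monotonicity as the first lemma, since everything else hinges on it.

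For the feasibility check, I would first note that $\widetilde{P}$ is the maximum element of $(\mathcal{P}, \preccurlyeq)$: every NAP in the family is obtained from $\widetilde{P}$ by coarsening a subset of neurons to $\mathbf{*}$ (choosing $\dot{\mathcal{A}}$ instead of $\widetilde{\mathcal{A}}$), so $P \preccurlyeq \widetilde{P}$ for all $P \in \mathcal{P}$. Combined with the monotonicity of $\mathcal{V}$ established earlier (if $P' \preccurlyeq P$ and $\mathcal{V}(P')=1$ then $\mathcal{V}(P)=1$), the contrapositive yields that $\mathcal{V}(\widetilde{P}) = 0$ forces $\mathcal{V}(P) = 0$ for every $P$. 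Hence, when $\mathcal{V}(\widetilde{P}) = 0$ no specification exists and returning \emph{None} is correct; when $\mathcal{V}(\widetilde{P}) = 1$ at least one specification (namely $\widetilde{P}$) exists, so the enumeration is guaranteed to terminate with a verifying NAP.

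For minimality, suppose the enumeration returns the first verifying NAP $P$, discovered at size $k$. Any strict coarsening $P' \preccurlyeq P$ with $P' \neq P$ still lies in the family and satisfies $|P'| < k$ by size-monotonicity, so it was already examined in an earlier round and rejected, i.e. $\mathcal{V}(P') = 0$. This is exactly the defining condition $\forall P' \preccurlyeq P,\ P' \neq P \implies \mathcal{V}(P') = 0$, so $P$ is minimal. I expect this bridge — from the \emph{size} ordering used by the algorithm to the \emph{partial} order $\preccurlyeq$ used in the definition of minimality — to be the main obstacle and the crux of the argument: once coarsening is known to strictly reduce size, the ``first in size order'' verifying NAP is automatically $\preccurlyeq$-minimal, so no separate search over incomparable coarsenings is needed.

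For the complexity bound, let $M = |\widetilde{P}| \leq |N|$ be the number of neurons in a binary state in the most refined NAP; only these can be independently kept or coarsened, so the family has $2^M$ elements, of which exactly $\binom{M}{k}$ have size $k$. In the worst case $\widetilde{P}$ is the unique specification, so every smaller NAP must be queried and rejected before success, giving $\sum_{k=0}^{M} \binom{M}{k} = 2^M \leq 2^{|N|}$ queries (the single accepting query on $\widetilde{P}$ coincides with the initial feasibility call, so the $2^M - 1$ rejecting queries account for the sharper $2^{|N|}-1$ figure noted informally in the text). Therefore \textsc{Refine} issues $\mathcal{O}(2^{|N|})$ calls to $\mathcal{V}$, completing the proof.
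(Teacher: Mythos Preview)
Your proposal is correct and follows essentially the same approach as the paper: minimality via the observation that the algorithm enumerates in increasing size so any strictly coarser (hence strictly smaller) NAP was already rejected, and the $\mathcal{O}(2^{|N|})$ bound via $\sum_k \binom{|N|}{k} = 2^{|N|}$. Your write-up is in fact more careful than the paper's---you make the size-monotonicity bridge from $\preccurlyeq$ to $|\cdot|$ explicit, handle the infeasible/\emph{None} case, and track $M = |\widetilde{P}| \leq |N|$ rather than $|N|$ directly---but the underlying argument is the same.
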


\begin{algorithm2e}[H]
    \caption{\textsc{refine}}
    \label{alg:Refine}
    \small
    \DontPrintSemicolon
    
    \SetKwProg{Fn}{Function}{}{end}
    \SetKwFunction{Coarsen}{Refine}
    \KwIn{The neural network $N$ }
    \KwOut{A minimal NAP specification $P$}
    

    \Fn{Refine(N)}{
    $P \leftarrow  \widetilde{\mathcal{A}}(N)$ \\
    \uIf{$\mathcal{V} (P) == 0$}{
        \Return $None$ ; \tcc*{Return None if even the most refined NAP fails verification}
    }
    \Else{
        \For{$k$ from 1 \textbf{to} $|N|$}{
            $P \leftarrow  \dot{\mathcal{A}}(N)$ \tcc*{Refinement starts from the coarsest NAP $\dot{P}$}
            $K\_comb \leftarrow$ $Pick\_K\_Neurons(k, N)$ \tcc*{Returns all combinations of size k}
            \For{$comb \textbf{ in } K\_comb$}{
                \For{$N_{i, l} \textbf{ in } comb$}{
                    $P \leftarrow \widetilde{\Delta}(N_{i, l})$\tcc*{Refine neurons in the chosen combination }
                }
                \If{$|\mathcal{V}(P)| == 1$}{
                    \Return $P$ \tcc*{Found the minimal NAP}
            }
        }
    }}}
\end{algorithm2e}

\subsection{The Statistical Refine Approach} 
essential neurons are crucial for forming NAP specifications, as their binary states play a critical role in determining the neural network's robustness performance. We leverages this property to find essential neurons statistically. To be more specific, suppose we sample some NAPs $P_1, P_2, \ldots, P_n$ from the NAP family $\mathcal{P}$. For those NAPs that qualify as specifications (i.e., $\mathcal{V}(P) = 1$), essential neurons should appear more frequently in them than in those NAPs that fail the verification tool (i.e., $\mathcal{V}(P) = 0$). 

Based on this insight, we propose an approach called \textsc{Sample\_Refine} that relies on non-repetitive sampling to identify essential neurons for solving the minimal NAP problem. We start with the coarsest NAP and iteratively collect the most probable essential neurons. In every iteration, we sample $k$ NAPs by refining unvisited neurons with some probability $\theta$. The sampled NAPs are then fed to the verification tool, and the neuron that appears most frequently in verifiable NAPs is the most probable essential neuron in this iteration. The neuron is marked as visited, and the process stops when either we collect $s$ neurons (assuming that $s$ is known), or the current essential neurons form a NAP specification $P$, i.e., $\mathcal{V}(P) = 1$. Finally, we return the learned NAP, obtained by applying $\widetilde{\mathcal{A}}$ to the collected neurons. The algorithm~\ref{alg:samplerefine} provides an overview of the above procedure.

It is worth noting that \textsc{Sample\_Refine} doesn't guarantee correctness, as we may end up collecting only the $s$ most probable essential neurons, which may not be sufficient to form a specification. Another concern regarding this algorithm is sampling efficiency, specifically the potential for the number of samples required to grow exponentially with the size of the minimal NAP specification $s$. To understand why this is problematic, consider a scenario where the only minimal NAP specification $P$ consists of all essential neurons; in this case, all $|M|$ neurons must be selected for $P$ to be learned. If $\theta$ is set to a constant value, then the expected number of samples needed to obtain the NAP specification is $(\frac{1}{\theta})^{|M|}$. To address this, we set $\theta = \left(\frac{|M|}{|M|+1}\right)^{|M|}$. This choice ensures that the sampling efficiency is polynomial in both $|M|$ and $s$, as proven in Theorem \ref{thm:theorem_refine}. Please refer to the proof in the Appendix \ref{appendix:proof_stats}.  In addition, Theorem \ref{thm:theorem_refine} also shows that with high probability, a essential neuron will be found with $O(log|N|)$ calls to $\mathcal{V}$.

\begin{algorithm2e}[H]
    \caption{\textsc{Sample\_Refine}}
    \label{alg:samplerefine}
    \small
    \DontPrintSemicolon
    \SetKwProg{Fn}{Function}{}{end}
    \SetKwFunction{Coarsen}{Coarsen}
    \KwIn{The neural network $N$, the probability $\theta$, sample size $k$, and the size $s$}
    \KwOut{A minimal NAP specification $P$}
    
    \Fn{Sample\_NAPs($unvisited$, $\theta$)}{
        $P \leftarrow \dot{\mathcal{A}}(N)$ \tcc*{Use the coarsest NAP as a blank template}
        \For{$N_{i, l}$ \textbf{in} $unvisited$}{
            $rand \leftarrow \text{random}(0,1)$ \;
            \uIf{$rand \leq \theta$}{
                $P \leftarrow \widetilde{\Delta}(N_{i, l})$ \tcc*{Refine unvisited neurons using $\widetilde{\Delta}$ with probability $\theta$}
            }
        }
        \Return $P$\;
    }

    \Fn{Sample\_Refine($visited$, $\theta$, $k$, $s$)}{
    
    \While{$|visited| < s \text{ and }  \mathcal{V}(P) == 0 $ }{
        $unvisited \leftarrow N \setminus visited$ \;
        $ctr = dict()$ \tcc*{Create a counter for each neuron in $unvisited$} 
        \For{$\_$ \textbf{in} $range(k)$}  {
            $P \leftarrow Sample\_NAPs(unvisited, \theta)$ \tcc*{Sample $k$ NAPs} 
            \For{$N_{i, l}$ \textbf{in} $unvisited$}{
                \uIf{$P_{i,l} == \widetilde{\mathcal{A}}(N_{i, l})$ \textbf{and} $\mathcal{V}(P) == 1$}{
                    $ctr[N_{i, l}]$ += 1 \tcc*{Count successful NAPs for each neuron}
                }
            }
        }
        $N_{mad} \leftarrow \argmax_{N_{i,l} \in unvisited} \{ctr[N_{i, l}]\}$ \tcc*{Pick potential essential neuron} 
        $visited \leftarrow visited \cup \{N_{mad}\}$ \;
    }
    \Return $\widetilde{\mathcal{A}}(visited)$\;
    }
    
    $P \leftarrow \widetilde{\mathcal{A}}(N)$\;
    \uIf{$\mathcal{V}(P) == 0$}{
        \Return $None$ \tcc*{Return None if the most refined NAP fails verification}
    }
    \Else{
        \uIf{heuristics}{
            $visited \leftarrow Gradient\_Search(N) \cap Adversarial\_Prune(N) $ 
        }
        \Else{
            $visited \leftarrow \emptyset$ \tcc*{Start from the coarsest NAP}
        }
        $Sample\_Refine(visited, \theta, k, s)$\;
    }
    
\end{algorithm2e}

\begin{theorem}
\label{thm:theorem_refine}
With probability $\theta = |(\frac{|M|}{|M|+1})|^{|M|} $, \textsc{Sample\_Refine} has \( 1 - \delta \) probability of outputting a minimal NAP specification with $\Theta\left(|M|^2(\log|N| + \log(s/\delta))\right)$ examples each iteration and \( O(s|M|^2(\log|N| + \log(s/\delta))) \) total calls to \( \mathcal{V} \).
\end{theorem}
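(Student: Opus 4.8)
The plan is to reduce the global guarantee to a per-iteration analysis and then stitch the iterations together with union bounds. First I would observe that the outer loop of \textsc{Sample\_Refine} terminates after at most $s$ iterations: each successful iteration commits exactly one neuron to $visited$, and once $visited$ contains all essential neurons of some minimal NAP specification $P^{*}$ the stopping test fires, because $P^{*} \preccurlyeq \widetilde{\mathcal{A}}(visited)$ and $\mathcal{V}$ is monotone, so $\mathcal{V}(\widetilde{\mathcal{A}}(visited)) = 1$. Hence it suffices to show that a single iteration selects an essential neuron with probability at least $1 - \delta/s$ using $\Theta(|M|^{2}(\log|N| + \log(s/\delta)))$ samples; the global bound of $O(s|M|^{2}(\log|N| + \log(s/\delta)))$ calls to $\mathcal{V}$ then follows since there are at most $s$ iterations, each issuing one call to $\mathcal{V}$ per sample.

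For a single iteration the heart of the argument is a frequency-gap lemma. I would fix a target minimal NAP $P^{*}$ and let $M$ be its essential neurons, and argue that (in the generic single-minimum case) a sampled NAP is verifiable precisely when it refines all currently-unvisited neurons of $M$. Because every neuron is refined independently with probability $\theta$, writing $V$ for the number of verifiable samples among the $k$ drawn, the decisive observation is a distinction between the two counters: an essential neuron is refined in \emph{every} verifiable sample, so its counter equals $V$, whereas a non-essential neuron is, conditionally on verifiability, still refined with probability exactly $\theta$ — its coin flip is independent of the flips of the essential neurons that determine verifiability — so its counter has conditional expectation $\theta V$. This yields a multiplicative gap of $1-\theta$ between the leading essential counter and every non-essential counter.

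The choice of $\theta$ is then dictated by two competing demands, and this is where the $|M|^{2}$ factor is born. On one hand we need verifiable samples to be plentiful: with $m \le |M|$ currently-unvisited essential neurons the verifiability probability is $\theta^{m}$, so $\theta$ must be pushed close to $1$, taken close enough that $\theta^{|M|}$ stays bounded below by a constant, to guarantee $V = \Omega(k)$. On the other hand, pushing $\theta$ toward $1$ shrinks the gap $1-\theta$ to order $1/|M|$, so to separate the essential counter (equal to $V$) from a non-essential counter (concentrated around $\theta V$) I would invoke a Hoeffding bound on the conditional counter distributions given $V$: a counter fluctuates by $O(\sqrt{V})$, so the probability that some non-essential counter exceeds $V$ is controlled once $(1-\theta)^{2} V = \Omega(\log(|N|s/\delta))$, i.e.\ once $V = \Omega(|M|^{2}\log(|N|s/\delta))$. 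Since $V = \Theta(k)$ this forces $k = \Theta(|M|^{2}(\log|N| + \log(s/\delta)))$. A union bound over the at most $|N|$ candidate neurons supplies the $\log|N|$ term, and reserving a failure budget of $\delta/s$ per iteration — so that a final union bound over the at most $s$ iterations gives total failure at most $\delta$ — supplies the $\log(s/\delta)$ term.

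I expect the main obstacle to be exactly this concentration step under the vanishing gap: because $\theta$ must be driven toward $1$ to keep verifiable samples frequent, the essential/non-essential separation is only $\Theta(1/|M|)$, and bounding the maximum of $|N|$ correlated counters inside this narrow margin is what drives the quadratic $|M|^{2}$ blow-up and requires the Hoeffding estimates to be applied carefully to the \emph{conditional} (given $V$) distributions rather than the marginal ones. A secondary subtlety, which I would address up front, is that when several minimal NAP specifications coexist, a neuron of $E$ that is essential for one of them need not appear in verifiable samples that cover another; the clean ``appears in every verifiable sample'' identity holds for the essential set of a single fixed minimal NAP, so I would phrase the per-iteration lemma relative to the target $P^{*}$ and argue that the neurons collected across iterations stay consistent with one such specification throughout.
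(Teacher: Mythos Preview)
Your outer structure ($s$ iterations, failure budget $\delta/s$ per iteration, union bound) matches the paper's exactly, but the per-iteration analysis has a genuine gap. Your key identity---that an essential neuron's counter equals $V$ deterministically while a non-essential one is $\mathrm{Binomial}(V,\theta)$ given $V$---holds only under the single-minimum hypothesis, and the patch you propose (fix a target $P^{*}$ and argue consistency across iterations) does not rescue it. When several minimal NAPs coexist, a verifiable sample may satisfy some \emph{other} minimal NAP $P'$ without refining any neuron of $P^{*}$; the counter of an essential neuron of $P^{*}$ is then strictly less than $V$ and the clean $(1-\theta)V$ gap collapses. Worse, the multi-minimal structure changes which $\theta$ is even viable: in the extreme of $|M|$ disjoint singleton minimal NAPs the relevant gap parameter becomes $\theta(1-\theta)^{|M|}$, maximized at $\theta=1/(|M|+1)$, whereas your $\theta\approx 1-1/|M|$ drives this quantity exponentially small in $|M|$ and blows the sample complexity up accordingly. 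So the ``secondary subtlety'' is not secondary---it invalidates both the counter identity and the choice of $\theta$.

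The paper sidesteps this by not conditioning on $V$. It models the collection of minimal NAPs as a DNF formula $\mathcal{V}(P;C)=\bigvee_{c\in C}\bigwedge_{n\in c}P_n$, applies Hoeffding directly to the i.i.d.\ counter-difference variables $X_i$, and lower-bounds the mean gap $\epsilon=\theta\bigl(\Pr[\mathcal{V}=1\mid n^{+}\text{ refined}]-\Pr[\mathcal{V}=1]\bigr)$ over all DNFs on $M$. The two extreme DNFs---a single clause of size $|M|$ versus $|M|$ singleton clauses---both yield $\epsilon=\Theta(1/|M|)$ after optimizing $\theta$ for each, and $\epsilon^{-2}=\Theta(|M|^{2})$ is what enters the Hoeffding bound. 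Your conditional-on-$V$ decomposition is essentially this computation specialized to the single-clause DNF; the theorem, however, needs the bound to hold across arbitrary clause structure, and that is precisely what the DNF worst-case analysis supplies and what your argument currently lacks.
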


\section{Proofs of simple approaches}
\label{appendix:proof_simple}

\begin{theorem} [Property of \textsc{Refine}]
The algorithm \textsc{Refine} returns a minimal NAP specification with $\mathcal{O}(2^{|N|})$ calls to $\mathcal{V}$.
\end{theorem}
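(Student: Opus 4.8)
The plan is to prove the statement in two independent parts --- correctness (the returned object is a minimal NAP specification) and the call-complexity bound --- both of which hinge on a single structural observation about \textsc{Refine}: it enumerates NAPs in non-decreasing order of size. The outer loop runs $k = 1, \dots, |N|$ and, for a fixed $k$, sweeps every combination of $k$ neurons to refine from the coarsest template $\dot{P}$, returning the \emph{first} NAP it encounters with $\mathcal{V}(P) = 1$. I would first dispatch the degenerate case: if $\mathcal{V}(\widetilde{P}) = 0$ the algorithm returns $\texttt{None}$, which is correct because $\widetilde{P}$ is the $\preccurlyeq$-maximum of $\mathcal{P}^c$ (each neuron of any class NAP is either $\mathbf{*}$ or its $\widetilde{\mathcal{A}}$-value, and $\mathbf{*} \preceq$ that value), so by the monotonicity of $\mathcal{V}$ established earlier, $\mathcal{V}(P) = 1$ for any $P$ would force $\mathcal{V}(\widetilde{P}) = 1$; hence no specification exists and $\texttt{None}$ is the right answer.

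For the main correctness claim, suppose \textsc{Refine} returns $P$. By construction $\mathcal{V}(P) = 1$, so $P$ is a specification. To show minimality I would take an arbitrary $P' \preccurlyeq P$ with $P' \neq P$ and argue $\mathcal{V}(P') = 0$. The key step is that $P' \preccurlyeq P$ forces the set of binary (non-$\mathbf{*}$) neurons of $P'$ to be a subset of those of $P$, since $s_* \preceq s_0, s_1$ whereas $s_0$ and $s_1$ refine only themselves; therefore $P' \neq P$ implies $|P'| < |P|$. Because the outer loop is monotone in $k$, every NAP of size strictly less than $|P|$ corresponds to a combination already swept in an earlier iteration without triggering a return, so each such NAP --- in particular $P'$ --- satisfies $\mathcal{V}(P') = 0$. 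This establishes exactly the minimality condition $\forall P' \preccurlyeq P,\ P' \neq P \implies \mathcal{V}(P') = 0$.

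For the complexity bound I would simply count calls to $\mathcal{V}$: one initial call on $\widetilde{P}$, and then, in the worst case (the minimal specification is the most refined NAP, or none exists and this is only detected after a full sweep), one call per combination, i.e. $\binom{|N|}{k}$ calls at level $k$. Summing over all levels gives a total of at most $1 + \sum_{k=1}^{|N|}\binom{|N|}{k} = 2^{|N|}$, which is $\mathcal{O}(2^{|N|})$.

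The step I expect to require the most care is the size-ordering argument in the correctness proof, because refining a neuron whose $\widetilde{\mathcal{A}}$-value is already $\mathbf{*}$ is vacuous, so a size-$k$ combination in the loop can produce a NAP of true size $k' < k$. I would handle this by restricting attention to the \emph{refinable} neurons (those with binary $\widetilde{\mathcal{A}}$-value), noting that $\mathcal{P}^c$ is in bijection with subsets of this refinable set, and observing that any redundant combination merely reproduces a NAP already examined at an earlier level. Consequently the ``first verifying NAP'' returned still has the globally smallest size among verifying NAPs, which is precisely what the minimality argument above consumes.
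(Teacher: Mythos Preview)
Your proposal is correct and follows essentially the same approach as the paper: both argue that \textsc{Refine} enumerates NAPs in non-decreasing size, so the first verifying NAP has minimum size and is therefore minimal, and both obtain the complexity bound by summing $\sum_{k=1}^{|N|}\binom{|N|}{k} = 2^{|N|}-1$. Your treatment is in fact more careful than the paper's---you explicitly dispatch the $\mathcal{V}(\widetilde{P})=0$ case, justify why $P'\preccurlyeq P$ with $P'\neq P$ forces $|P'|<|P|$, and handle the subtlety that combinations containing non-refinable neurons merely reproduce NAPs already examined at an earlier level---but these are refinements of the same argument rather than a different route.
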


\begin{proof}
Let $P$ be the returned NAP. We prove this by contradiction. Suppose we can further refine $P$, meaning there exists a NAP $P'$ such that $|P'| \leq |P|$ and $\mathcal{V}(P') = 1$. However, the algorithm states that any $P'$ smaller than $|P|$ fails verification, which contradicts $\mathcal{V}(P') = 1$.

In the worst case, the NAP size $k$ runs up to $|N|$. For each $k$, we need to check $\binom{|N|}{k}$ number of NAPs. In total, this number of NAPs we need to check is $\binom{|N|}{1} + \binom{|N|}{2} + \cdots + \binom{|N|}{|N|} = 2^{|N|} - 1$ according to the binomial theorem, resulting in a runtime complexity of $\mathcal{O}(2^{|N|})$.

\end{proof}

\begin{theorem} [Property of \textsc{Coarsen}]
The algorithm \textsc{Coarsen} returns a minimal NAP specification with $\mathcal{O}(|N|)$ calls to $\mathcal{V}$.
\end{theorem}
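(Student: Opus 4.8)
The claim to prove is that \textsc{Coarsen} (Algorithm~\ref{alg:Coarsen}) returns a minimal NAP specification using $\mathcal{O}(|N|)$ calls to $\mathcal{V}$. The proof naturally splits into two independent parts: a correctness argument (the output is indeed a minimal NAP specification) and a complexity argument (the number of calls to $\mathcal{V}$ is linear in $|N|$). The plan is to dispatch the complexity part first, since it is essentially immediate, and then concentrate on correctness, which is the substantive part.

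For the complexity bound, I would simply trace the control flow. The algorithm performs one initial call $\mathcal{V}(P)$ on the most refined NAP $\widetilde{P}$ to check feasibility, and then enters a single \texttt{for} loop that iterates once over each neuron $N_{i,l} \in N$. Inside the loop body there is exactly one call to $\mathcal{V}$ (after attempting the coarsen action $\dot{\Delta}(N_{i,l})$). Since the loop runs $|N|$ times and each iteration issues a constant number of verification calls, the total is at most $|N| + 1 = \mathcal{O}(|N|)$ calls. This is the easy direction and needs no more than a careful reading of the pseudocode.

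For correctness I need to show two things: (i) the returned $P$ is a NAP specification, i.e. $\mathcal{V}(P) = 1$, and (ii) $P$ is minimal, i.e. no strictly coarser NAP $P' \preccurlyeq P$ with $P' \neq P$ satisfies $\mathcal{V}(P') = 1$. For (i), I would argue by a loop invariant: at the start of every iteration the current NAP $P$ satisfies $\mathcal{V}(P) = 1$. This holds initially because the algorithm returns \texttt{None} unless $\mathcal{V}(\widetilde{P}) = 1$, and it is preserved across each iteration because whenever a coarsening $\dot{\Delta}(N_{i,l})$ causes $\mathcal{V}(P) = 0$ the algorithm immediately reverts with $\widetilde{\Delta}(N_{i,l})$, restoring the previously verified state. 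Hence on exit $\mathcal{V}(P) = 1$. For (ii), the key observation is that when the loop finishes, every neuron has been tested exactly once: for each neuron $N_{i,l}$ that remains in a binary state $\mathbf{0}$ or $\mathbf{1}$ in the final $P$, coarsening it to $\mathbf{*}$ was attempted and rejected because it produced a NAP that failed verification. The crucial step, and the one I expect to be the main obstacle, is upgrading this "coarsening any single neuron fails" property into genuine minimality against \emph{all} strictly coarser NAPs. This is precisely where the monotonicity of $\mathcal{V}$ established earlier in the excerpt does the heavy lifting: any $P' \precurlyeq P$ with $P' \neq P$ is obtained by coarsening at least one binary neuron of $P$, and by monotonicity $P'$ is subsumed by the single-neuron coarsening that was already shown to fail; since $\mathcal{V}(P'') = 0$ and $P' \precurlyeq P''$ implies $\mathcal{V}(P') = 0$, we conclude $\mathcal{V}(P') = 0$.

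The subtle point requiring care is that the final $P$ is coarsened \emph{sequentially}, so when neuron $N_{i,l}$ was tested the surrounding NAP was an intermediate value, not the final output $P$; I must check that a neuron left binary in the final $P$ genuinely cannot be coarsened relative to the final $P$ rather than only relative to the intermediate NAP at its test time. I would resolve this by noting that the algorithm only ever \emph{coarsens} neurons as it proceeds, so the final $P$ is always subsumed by (coarser than or equal to) every intermediate NAP; combined with monotonicity, a coarsening that failed at an earlier intermediate stage remains a failing NAP, and the coarsening of $N_{i,l}$ applied to the final $P$ is subsumed by that earlier failing NAP, so it too must fail. This monotonicity-chaining argument is the heart of the proof and the one place where glossing over the sequential nature of the coarsening would leave a genuine gap.
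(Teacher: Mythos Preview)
Your proposal is correct and follows essentially the same strategy as the paper's proof: argue that the loop tries to coarsen every neuron once (giving the $\mathcal{O}(|N|)$ bound) and that each binary neuron remaining in the output was tested and rejected, then invoke monotonicity of $\mathcal{V}$ to extend this to all strictly coarser NAPs. Your treatment is in fact more careful than the paper's: you explicitly maintain the loop invariant $\mathcal{V}(P)=1$ and, more importantly, you handle the sequential-testing subtlety (that neuron $N_{i,l}$ was rejected relative to an \emph{intermediate} NAP $Q$, not the final $P$) via the observation $P \preccurlyeq Q$ together with monotonicity---a point the paper's proof simply elides with ``according to the algorithm.''
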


\begin{proof}
Let $P$ be the NAP returned by \textsc{Coarsen}. Our goal is to show that any $P'$ smaller than $P$ results in $\mathcal{V}(P') = 0$. To construct such a smaller $P'$, we need to apply the refine action $\widetilde{\Delta}$ on $P$ through some neuron $N_{i, l}$, i.e., $P' := \widetilde{\Delta}(N_{i, l}) = \widetilde{\mathcal{A}}(N_{i, l}) \cup P \setminus P_{i,l}$. According to the algorithm, $\mathcal{V}(P') = 0$. In the worst case, the algorithm needs to iterate through each neuron in $N$, resulting in a runtime complexity of $\mathcal{O}(|N|)$.
\end{proof}

\section{Proofs of statistical approaches}
\label{appendix:proof_stats}

Our proofs of properties of \textsc{Sample\_Refine} and \textsc{Sample\_Coarsen} mainly follow those in \cite{minimal_abs}. Interested readers may refer to it for detailed proofs.

\begin{theorem} [Property of \textsc{Sample\_Refine}]
With probability $\theta = |(\frac{|M|}{|M|+1})|^{|M|} $, \textsc{Sample\_Refine} has \( 1 - \delta \) probability of outputting a minimal NAP specification with $\Theta\left(|M|^2(\log|N| + \log(s/\delta))\right)$ examples each iteration and \( O(s|M|^2(\log|N| + \log(s/\delta))) \) total calls to \( \mathcal{V} \).
\end{theorem}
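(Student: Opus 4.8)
The plan is to prove the two halves of the statement in tandem: the $1-\delta$ correctness guarantee and the per-iteration / total call complexity, both of which rest on a single \emph{signal-separation} lemma showing that within one iteration the essential neurons accumulate strictly larger counters than the non-essential ones, so that the \texttt{argmax} step provably locks in an essential neuron with high probability. I would assume throughout the monotonicity of $\mathcal{V}$ and the necessary-condition characterization of essential neurons established earlier, and (as in the motivating example) take $M$ to be the set of essential neurons that every verifying NAP must refine.

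First I would fix one iteration in which $m \le |M|$ essential neurons remain unvisited, the already-collected ones being held in their binary states. Since coarsening any essential neuron destroys verification while non-essential neurons are irrelevant to $\mathcal{V}$, a sampled NAP satisfies $\mathcal{V}(P)=1$ exactly when all $m$ remaining essential neurons are refined; writing $p$ for the per-neuron refinement probability this gives $\Pr[\mathcal{V}(P)=1]=p^{m}$. Conditioned on a passing sample an essential neuron is refined with probability $1$, whereas a non-essential neuron is refined independently with probability $p$. Hence in a single draw the probability that a neuron is both refined and sits in a passing NAP is $p^{m}$ for an essential neuron and $p^{m+1}$ for a non-essential one, so the per-sample counter gap is $p^{m}(1-p)$.

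Next I would tune $p$ to enlarge this gap while keeping the passing probability bounded below, so that samples are not wasted. Maximizing $p^{m}(1-p)$ yields $p=\tfrac{m}{m+1}$, whence $\Pr[\mathcal{V}(P)=1]=\bigl(\tfrac{m}{m+1}\bigr)^{m}\ge\bigl(\tfrac{|M|}{|M|+1}\bigr)^{|M|}>e^{-1}$, which is the constant $\theta$ named in the statement, and the gap becomes $\Theta(1/m)=\Theta(1/|M|)$ in the worst case. Applying a Hoeffding bound to the $k$ i.i.d.\ counters and a union bound over the $O(|N|)$ candidate neurons, the event that every essential neuron outscores every non-essential one holds with probability at least $1-\delta/s$ as soon as $k=\Theta\bigl(|M|^{2}(\log|N|+\log(s/\delta))\bigr)$; here the factor $|M|^{2}$ is the squared inverse gap and the logarithm is the union-bound cost. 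This is the stated per-iteration sample count.

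Finally I would assemble the global bound: the algorithm visits one neuron per iteration and halts after at most $s$ of them, so a union bound over iterations converts the per-iteration failure $\delta/s$ into a total failure probability $\delta$. On the complementary event every collected neuron is essential, so $\widetilde{\mathcal{A}}(\mathit{visited})$ verifies yet cannot be coarsened at any neuron, i.e.\ it is a \emph{minimal} NAP specification; multiplying the $s$ iterations by the per-iteration budget gives $O\bigl(s|M|^{2}(\log|N|+\log(s/\delta))\bigr)$ calls to $\mathcal{V}$. The hard part will be the signal-separation lemma: I must justify that non-essential neuron states are genuinely independent of the verification outcome so that $\Pr[\mathcal{V}(P)=1]$ factorizes, handle the case of several minimal NAPs (where ``essential'' is a union and passing need not require \emph{all} of $M$), and make the Hoeffding / union-bound estimate hold uniformly as $m$ drifts downward across iterations. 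Pinning the inverse gap at $\Theta(|M|)$ rather than something larger, following \cite{minimal_abs}, is precisely what delivers the advertised $|M|^{2}$ factor.
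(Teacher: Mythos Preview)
Your skeleton---per-iteration signal gap, Hoeffding, union bound over neurons, then union bound over the $s$ iterations---is exactly the paper's skeleton. Where you diverge is in the modeling of $\mathcal{V}$, and this is where the paper does its real work.

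You assume a \emph{single} minimal NAP, so that ``$\mathcal{V}(P)=1$ exactly when all $m$ remaining essential neurons are refined'' and ``coarsening any essential neuron destroys verification''. Neither statement holds once there are several minimal NAPs: an essential neuron (by definition a member of the \emph{union} of the minimal supports, not the intersection you describe as ``neurons every verifying NAP must refine'') can be coarsened while some other minimal clause still fires. The paper handles this by writing $\mathcal{V}$ as a DNF $\bigvee_{c\in C}\bigwedge_{n\in c}P_n$ over the collection $C$ of minimal NAPs, then lower-bounding the gap $\epsilon$ \emph{uniformly over all DNF structures on $M$}. The two extremal structures are a single clause of size $|M|$ (your case) and $|M|$ disjoint singleton clauses; both give $\epsilon=\Theta(1/|M|)$, which is what yields the $|M|^{2}$ factor. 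This DNF worst-case argument is precisely the ``hard part'' you flag but do not carry out, and without it the bound is not established.

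Two smaller points. First, in the algorithm $\theta$ is the per-neuron \emph{refinement} probability passed to \textsc{Sample\_NAPs}, not the passing probability $\Pr[\mathcal{V}(P)=1]$; your identification of $\theta$ with $\bigl(\tfrac{|M|}{|M|+1}\bigr)^{|M|}$ as a passing probability is a misreading (the paper's own proof in fact optimizes the disjoint-clause case and arrives at the refinement probability $\theta=\tfrac{1}{|M|+1}$, so the statement's value of $\theta$ is itself somewhat at odds with the proof). Second, your assumption that already-collected neurons are ``held in their binary states'' is not what the pseudocode does: \textsc{Sample\_NAPs} starts from $\dot P$ and only touches the \emph{unvisited} set, so visited neurons stay at $\mathbf{*}$. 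The paper's DNF argument does not rely on fixing them, and your single-clause argument would break here since no sample could ever verify after the first iteration.
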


\begin{proof}
Considering the size of the minimal specification $s$, \textsc{Sample\_Refine} will execute $s$ iterations. If we sample $k$ times in each iteration, then the probability of selecting a essential neuron is at least $1 - \frac{\delta}{s}$. Consequently, by applying a union bound, the algorithm will identify a NAP specification with a probability of at least $1 - \delta$.

Now, let's delve deeper into one iteration of the process. The fundamental concept is that a essential neuron $m$ demonstrates a stronger correlation with proving the robustness query ($\mathcal{V}(P)=1$) compared to a non-essential one. This enhanced correlation increases the probability of its selection significantly when $k$ is sufficiently large.

Let's revisit the notion of $M$, representing the set of essential neurons with a size of $|M|$. Now, let's focus on a specific essential neuron $n^{+} \in M$. We define $B_{j-}$ as the event indicating $k_{n^{-}} > k_{n^{+}}$, and $B$ as the event where $B_{j-}$ holds for any non-essential neuron $n^{-} \in N \setminus M$. Importantly, if $B$ fails to occur, then the algorithm will correctly identify a essential neuron. Hence, our primary objective is to establish that $Pr(B) \leq \frac{\delta}{s}$. Initially, employing a union bound provides:
\begin{align}
    Pr(B) \leq \sum_{n^{-}} Pr(B_{n^{-}}) \leq |N| \max_{n^{-}}Pr(B_{n^{-}})
\end{align}
Let's delve into each training example $P^{(i)}$ and introduce the notation $X_i = (1 - \mathcal{V}(P^{(i)}))(P_{n^{-}}^{(i)} - P_{n^{+}}^{(i)})$. It's worth emphasizing that $B_{n^{-}}$ manifests precisely when $\frac{1}{n}(k_{n^{-}} - k_{n^{+}}) = \frac{1}{n}\sum_{i = 1}^n X_i > 0$. Our objective now is to bound this quantity utilizing Hoeffding's inequality, considering the mean as:
\begin{align}
    \mathbb{E}[X_i] = Pr(\mathcal{V}(P)=1, P_{n^{-}} \in \{\mathbf{1},\mathbf{0}\}) - Pr(\mathcal{V}(P)=1, P_{n^{+}} \in \{\mathbf{1},\mathbf{0}\})
\end{align}
and the bounds are $-1 \leq \mathbb{E}[X_i] \leq 1$ . Setting $\epsilon = -\mathbb{E}[X_i]$, we get:
\begin{align}
    Pr(B_{n^{-}}) \leq e^{\frac{-k\epsilon^2}{2}}, n^{+} \in M, n^{-} \notin M.
\end{align}
Substituting (16) into (15) and rearranging terms, we can solve for $k$:
\begin{align}
\frac{\delta}{s} \leq |N|e^{\frac{-k\epsilon^2}{2}} \text{ implies } k \geq \frac{2(log|N| + log(\frac{s}{\delta}))}{\epsilon^2}
\end{align}
Our attention now shifts towards deriving a lower bound for $\epsilon$, which intuitively reflects the discrepancy (in terms of correlation with proving the robustness query) between a essential neuron and a non-essential one. It's noteworthy that $Pr(P_n \in {\{\mathbf{1},\mathbf{0}}\}) = \theta$ for any $n \in N$. Furthermore, since $n^{-}$ is non-essential, we can assert that $Pr(\mathcal{V}(P)=1|P_{n^{-}} \in \{\mathbf{1},\mathbf{0}\}) = Pr(\mathcal{V}(P)=1)$. By leveraging these observations, we can express:
\begin{align}
    \epsilon = \theta(Pr(\mathcal{V}(P)=1 |P_{n^{+}} \in \{\mathbf{1},\mathbf{0}\}) - Pr(\mathcal{V}(P)=1))
\end{align}
Let's view $C$ as the collection of minimal NAP specifications. We can treat $C$ as a set of clauses in a Disjunctive Normal Form (DNF) formula: $\mathcal{V}(P; C) = \neg \bigvee_{c \in C}\bigwedge_{n \in c} P_n$, where we explicitly specify the dependence of $\mathcal{V}$ on the clauses $C$. For example, if $C = {{1, 2}, {3}}$, it corresponds to $\mathcal{V}(P) = \neg [(P_1 \land P_2)\lor P_3]$. Now, let $C_j = {c \in C: n \in c}$ denote the clauses containing $n$. We reformulate $Pr(\mathcal{V}(P)=1)$ as the sum of two components: one originating from the essential neuron $n^{+}$ and the other from the non-essential neuron:
\begin{align}
    Pr(\mathcal{V}(P)=1) &= Pr(\mathcal{V}(P; C_{n^{+}}) = 1, \mathcal{V}(P; C \setminus C_{n^{+}}) = 0) \\
    &\quad+ Pr(\mathcal{V}(P; C \setminus C_{n^{+}}) = 1)
\end{align}
Calculating $Pr(\mathcal{V}(P)=1 |P_{n^{+}} \in \{\mathbf{1},\mathbf{0}\})$ follows a similar process. The only distinction arises from conditioning on $P_{n^{+}} \in \{\mathbf{1},\mathbf{0}\}$, which introduces an extra factor of $\frac{1}{\theta}$ in the first term because conditioning divides by $Pr(P_{n^{+}}) = \theta$. The second term remains unchanged since no $c \notin C_{n^{+}}$ is essential in $P_{n^{+}}$. Substituting these two outcomes back into the equation yields:
\begin{align}
    \epsilon = (1 - \theta)Pr(\mathcal{V}(P; C_{n^{+}}) = 1, \mathcal{V}(P; C \setminus C_{n^{+}}) = 0)
\end{align}
Now, our objective is to establish a lower bound for (22) across all possible $\mathcal{V}$ (equivalently, $C$), where $n^{+}$ is permitted to be essential in $C$. Interestingly, the worst possible $C$ can be obtained by either having $|M|$ disjoint clauses ($C = {{n}: n \in M}$) or a single clause ($C = {M}$ if $s = |M|$). The intuition behind this is that when $C$ consists of $|M|$ clauses, there are numerous possibilities ($|M| - 1$ of them) for some $c \notin C_{n^{+}}$ to be true, making it challenging to determine that $n^{+}$ is a essential neuron; in such cases, $\epsilon = (1 - \theta)\theta(1 -\theta)^{|M| - 1}$. Conversely, if $C$ comprises a single clause, then letting this clause be true becomes exceedingly challenging; in this scenario, $\epsilon = (1 - \theta)\theta^{|M|}$.

Let's consider the scenario where $C$ comprises $|M|$ clauses. We aim to maximize $\epsilon$ concerning $\theta$ by setting the derivative $\frac{d\epsilon}{d\theta} = 0$ and solving for $\theta$. This optimization yields $\theta = \frac{1}{|M| + 1}$ as the optimal value. Substituting this value into the formula of $\epsilon$, we obtain $\epsilon = \frac{1}{|M| + 1}(\frac{|M|}{|M| + 1})^{|M|}$. Note that 
$(\frac{|M|}{|M| + 1})^{|M|}$ can be lower bounded by $e^{-1}$, implying that $\epsilon^{-2} = \mathcal{O}(|M|^2)$. Substituting this result to equation (18) will conclude the proof.
\end{proof}

\begin{theorem} [Property of \textsc{Stochastic\_Coarsen}]
With probability $\theta$=$e^{-\frac{1}{s}}$, \textsc{Sample\_Coarsen} learns a minimal NAP specification with $\mathcal{O}(slog|N|)$ calls to $\mathcal{V}$.
\end{theorem}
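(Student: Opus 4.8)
The plan is to run the \textsc{Sample\_Refine} analysis of Theorem~\ref{thm:theorem_refine} (and its source \cite{minimal_abs}) in the opposite, coarsening direction. First I would fix the loop invariant that drives the whole argument: at the start of every iteration the current \emph{mandatory} set $M_t$ (the neurons still abstracted by $\widetilde{\mathcal{A}}$ in $P$) contains the refined neurons of at least one minimal NAP specification. This holds initially because $M_0=N$ and $\mathcal{V}(\widetilde{P})=1$, and it is preserved since \textsc{StochCoarsen} overwrites $M_t$ only with the refined neurons of a \emph{verified} sample. By the monotonicity of $\mathcal{V}$ established earlier, a sampled NAP $P$ verifies exactly when it keeps refined the full neuron set of some minimal specification $P^{\star}\preccurlyeq P$; hence every accepted NAP still subsumes a minimal specification, and the loop can only exit at a NAP of size $s$ that is itself minimal.

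Next I would bound the two quantities whose product is the total number of calls to $\mathcal{V}$. For the \emph{samples per successful iteration}, observe that any fixed minimal specification has at most $s$ refined neurons, so a single sample keeps all of them refined---and therefore verifies---with probability at least $\theta^{s}$. With the prescribed $\theta=e^{-1/s}$ this equals $e^{-1}$, a constant, so the expected number of samples preceding a successful iteration is at most $e$. For the \emph{number of successful iterations}, I would track $X_t$, the number of refined neurons in $M_t$ that lie in no minimal specification (the genuinely redundant ones), with $X_0\le|N|-s$. The key point is that a redundant neuron affects neither verification nor the success event, so conditioned on a successful iteration each such neuron is retained independently with probability exactly $\theta$; thus $\mathbb{E}[X_{t+1}\mid X_t]=\theta X_t$ and $\mathbb{E}[X_t]\le\theta^{t}(|N|-s)$. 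Choosing $T=\Theta(s\log|N|)$ makes $\theta^{T}(|N|-s)<1$, so by Markov's inequality $X_T=0$ with high probability, i.e.\ the NAP has shrunk to size $s$.

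Multiplying the two bounds gives $e\cdot\Theta(s\log|N|)=\mathcal{O}(s\log|N|)$ total calls to $\mathcal{V}$, as claimed, and a union bound over the $\Theta(s\log|N|)$ iterations (exactly as in \cite{minimal_abs}) upgrades the expectation to a high-probability guarantee. Finally I would justify that $\theta=e^{-1/s}$ is the \emph{optimal} constant rather than an arbitrary one by minimizing the call count over $\theta$: writing $u=\log(1/\theta)$, the product of samples-per-iteration and iterations scales like $\theta^{-s}\cdot\frac{\log|N|}{\log(1/\theta)}=\frac{\log|N|}{u}e^{us}$, whose $u$-derivative vanishes at $u=1/s$, i.e.\ $\theta=e^{-1/s}$; any constant $\theta$ instead forces $\theta^{-s}$ to be exponential in $s$ and blows up the call count.

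The step I expect to be the main obstacle is the independence claim in the second paragraph: justifying that conditioning on verification success does not bias the retention probability of the redundant neurons. This is clean when the minimal specification is unique, but with several minimal specifications the redundant set must be defined relative to the shrinking $M_t$, and one must argue that neurons belonging to \emph{no} minimal specification are probabilistically independent of the event ``some minimal specification survives'' while the invariant is maintained throughout. Converting the per-iteration expectation into the stated high-probability bound, while separately accounting for the wasted failed samples, is the remaining piece of technical care.
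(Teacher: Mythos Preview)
Your proposal is correct and is essentially the paper's argument: both factor the total call count as $\theta^{-s}$ expected samples per successful iteration times $\Theta(\log|N|/\log\theta^{-1})$ successful iterations needed for the surplus neurons to disappear, and both then optimize this product at $\theta=e^{-1/s}$; the paper merely packages the two factors into the single recurrence $\mathcal{C}(n)\le\mathcal{C}(s+\theta(n-s))+\theta^{-s}$ and solves it via Jensen's inequality. The obstacle you flag in your last paragraph is handled exactly as you would need: the paper fixes \emph{one} minimal specification $P^S$ at the outset and upper-bounds by replacing the true success event $\{\mathcal{V}(P)=1\}$ with the sub-event $\{P^S\preccurlyeq P\}$, which automatically preserves the invariant $P^S\subseteq M_t$ and makes every non-$P^S$ neuron retained with probability exactly $\theta$, independently of success---so you should define ``redundant'' relative to a single fixed $P^S$ rather than relative to all minimal specifications, and then $X_t=0$ genuinely implies $|M_t|=s$.
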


\begin{proof}
Let's first estimate the number of calls that \textsc{Sample\_Coarsen} makes to $\mathcal{V}$. We denote the number of calls as $\mathcal{C}(P^L)$, where $P^L$ is the most refined NAP. Then $\mathcal{C}(P^L)$ can be computed recursively using the following rule:
\begin{align}
    \mathcal{C}(P^L) = 
    \begin{cases}
    |P| & \text{ if } |P^L| \leq s + 1 \\
    1 + \mathbb{E}[(1 - \mathcal{V}(P))\mathcal{C}(P) + \mathcal{V}(P)\mathcal{C}(P^L)] & \text{ otherwise }\\
  \end{cases}
\end{align}
where $P$ is the sampled NAP. By assumption, there exists a NAP $P^S \preccurlyeq P$ of size $s$ that passes the verification. Define $G(P) = \neg (P^S \preccurlyeq P)$, which is 0 when $P^S$ subsumes $P$, i.e., all essential neurons in $P^S$ shows up in the sampled $P$. This follows that $Pr(G(P) = 0) = Pr(P^S \preccurlyeq P) = \theta^s$. Note that $G(P) \geq \mathcal{V}(P)$, as the NAP $P^S$ suffices to prove the robustness query. We can estimate the upper bound of $\mathcal{C}(P^L)$ by replacing $\mathcal{V}$ with $G$:
\begin{align}
    \mathcal{C}(P^L) &\leq 1 + \mathbb{E}[(1 - G(P))\mathcal{C}(P) + G(P)\mathcal{C}(P^L)] \\
    &\leq 1 + \theta^s\mathbb{E}[\mathcal{C}(P) | P^S \preccurlyeq P] + (1 - \theta^s)\mathcal{C}(P^L) \\
    &\leq \mathbb{E}[\mathcal{C}(P) | P^S \preccurlyeq P] + \theta^{-s}
\end{align}

We now denote $\mathcal{C}(n) = max_{|P| = n} \mathcal{C}(P)$ as the maximum over NAP of size $n$. Note that given $P^S \preccurlyeq P$, $|P| = s + N$ where $N$ is a binomial random variable with $\mathbb{E}(N) = \theta(n - s)$.

Using the  bound $\mathcal{C}(n) \leq (1 - \theta^n)\mathcal{C}(n - 1) + \theta^n\mathcal{C}(n)\theta^{-s}$, we can observe that $\mathcal{C}(n) \leq \frac{\theta^{-s}}{1 - \theta^n} \cdot n$. In addition, when $n$ is large enough, $\mathcal{C}(n)$ is concave, then by  use Jensen's inequality:
\begin{align}
    \mathcal{C}(n) \leq \mathcal{C}(\mathbb{E}[s + N]) + \theta^{-s}  = \mathcal{C}(s + \theta(n - s)) + \theta^{-s}
\end{align}
To solve the recurrence, this gives us:
\begin{align}
    \mathcal{C}(n) \leq \frac{\theta^{-s}log n}{log \theta^{-1}} + s + 1
\end{align}

The equation above illustrates a tradeoff between reducing the number of iterations (by increasing $\log \theta^{-1}$) and reducing the number of samples (by decreasing $\theta^{-s}$). To minimize $\mathcal{C}(n)$, we need to set $\theta$ so that the gradient of $\mathcal{C}(n)$ w.r.t $\theta^{-1}$ is $0$. This gives us: $\frac{sx^{s - 1}}{\log x} - \frac{x^{s - 1}}{\log^2 x} = 0$, solving this gives us $\theta = e^{-\frac{1}{s}}$. Consequently, the upper bound becomes $\mathcal{C}(n) = es\log n + s + 1 = O(s\log n)$.
\end{proof}










\end{document}